\newcommand\Myperm[2][^n]{^{#1\mkern-2.5mu}{}P_{#2}}
\newtheorem{theorem}{Theorem}
\newtheorem{corollary}[theorem]{Corollary}
\newtheorem{definition}[theorem]{Definition}
\newtheorem{Proposition}[theorem]{Proposition}
\newtheorem{lemma}[theorem]{Lemma}
\title{Multiclass Classification using dilute bandit feedback}
\author[1]{\href{mailto:Gaurav Batra <gaurav.batra@students.iiit.ac.in>}{Gaurav Batra}{}}
\author[2]{\href{mailto:Naresh Manwani <naresh.manwani@iiit.ac.in>}{Naresh Manwani}{}}
\affil[1]{%
    Machine Learning Lab\\
    IIIT Hyderabad\\
    gaurav.batra@students.iiit.ac.in
}
\affil[2]{%
    Machine Learning Lab\\
    IIIT Hyderabad\\
    naresh.manwani@iiit.ac.in
    
}
\begin{document}
\maketitle

\begin{abstract}
  This paper introduces a new online learning framework for multiclass classification called \textit{learning with diluted bandit feedback}. At every time step, the algorithm predicts a candidate label set instead of a single label for the observed example. It then receives a feedback from the environment whether the actual label lies in this candidate label set or not. This feedback is called "diluted bandit feedback". Learning in this setting is even more challenging than the bandit feedback setting \cite{Kakade2008}, as there is more uncertainty in the supervision. We propose an algorithm for multiclass classification using dilute bandit feedback (MC-DBF), which uses the exploration-exploitation strategy to predict the candidate set in each trial. We show that the proposed algorithm achieves $\mathcal{O}(T^{1-\frac{1}{m+2}})$ mistake bound if candidate label set size (in each step) is $m$. We demonstrate the effectiveness of the proposed approach with extensive simulations.
\end{abstract}

\section{Introduction}

In multi-class classification, the learning algorithm is given access to the examples and their actual class labels. The goal is to learn a classifier which given an example, correctly predicts its class label. This is called the full information setting. In the full information setting, online algorithms for multiclass classification are discussed in \cite{4066017,inproceedings,DBLP:conf/sdm/MatsushimaSYNN10}. In many applications, we do not get labels for all the examples. Instead, we can only access whether the predicted label for an example is correct. This is called bandit feedback setting \cite{Kakade2008}. Bandit feedback-based learning is useful in several web-based applications, such as sponsored advertising on web pages and recommender systems as mentioned in \cite{Kakade2008}.

In the linearly separable case, Kakade et. al \cite{Kakade2008} propose Banditron algorithm which can learn using bandit feedbacks. Banditron makes $\mathcal{O}(\sqrt{T})$ expected number of mistakes in the linearly separable case and $\mathcal{O}(T^{2/3})$ in the worst case. On the other hand Newtron \cite{NIPS2011_fde9264c} (based on the online Newton method) achieves $\mathcal{O}(\log T)$ regret bound in the best case and $\mathcal{O}(T^{2/3})$ regret in the worst case. Beygelzimer et. al \cite{DBLP:journals/corr/BeygelzimerOZ17} propose Second Order Banditron (SOBA) which achieves $\mathcal{O}(\sqrt{T})$ regret in the worst case.

\begin{figure*}[t]
\centerline{\fbox{\includegraphics[width = \textwidth]{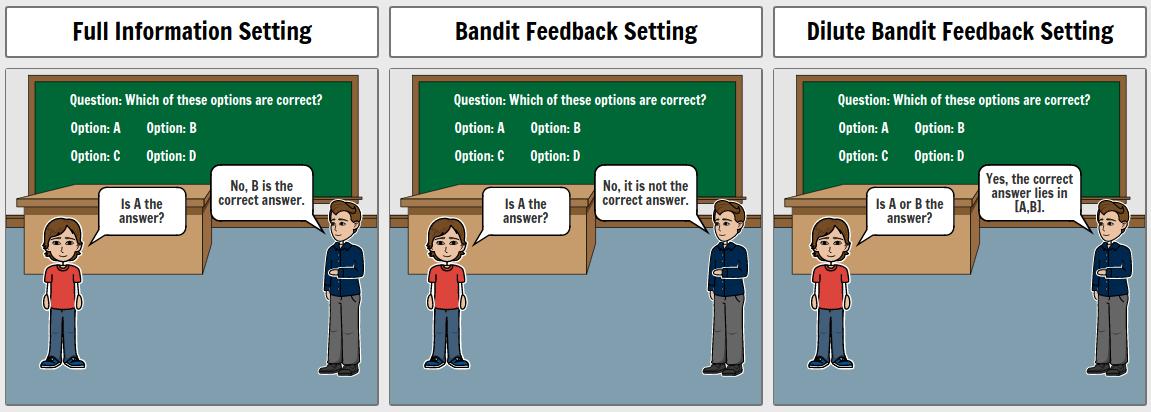}}}
\caption{The three types of supervised learning settings are explained in this figure. (a) Full Information Setting: In this setting, the agent receives the correct label on prediction. (b) Bandit Feedback Setting: Here, the agent gets the information whether his prediction is accurate or not. (c) Partial Bandit Feedback Setting: In this setting, the agent predicts a set of labels and gets the feedback whether the correct label lies in the predicted set or not.}
\label{fig:Setting}
\end{figure*}

In the bandit feedback-based approaches, the algorithm predicts a single label and seeks the feedback whether the predicted label is correct. Here, we introduce a new learning framework called "learning under diluted bandit feedback". At every time step, when the algorithm observes a new example, it predicts a candidate label set instead of a single label. Now the algorithm seeks the oracle's feedback whether the actual label lies in this candidate label set or not. Note that learning in this setting is even more challenging than bandit feedback setting as there is another level of uncertainty in the supervision. That is, if the feedback says that the actual label lies in the predicted candidate label set, we still do not know which of the label in the candidate set is the true one. Using the example presented in Figure~\ref{fig:Setting}, we can see the difference between bandit feedback and diluted bandit feedback.

Diluted bandit feedback-based learning can be useful in many applications. For example, consider the situation where a doctor is trying to diagnose a patient. Based on the patient's initial symptoms, she starts the treatment with some idea about possible diseases. Based on the treatment outcome, the doctor would know whether the actual disease was correctly diagnosed in the possible diseases guessed by the doctor. The result of the treatment here is diluted bandit feedback. It does not tell the exact disease but only indicates whether the actual disease lies in a possible set of diseases.

Another example would be that of advertising on web pages. The user first queries the system. Based on the query and user-specific information, the system makes a prediction as a set of advertisements. Finally, the user may either click on one of the ads or just ignore all of them. The action of the user clicking or ignoring the advertisements is the dilute bandit feedback.
In the cases mentioned above, knowing the ground truth label beforehand may not always be possible. Hence, this provides the motivation for coming up with the setting and the corresponding algorithm.

Note that diluted bandit feedbacks make the supervision weaker than bandit feedbacks. In this paper, we attempt the problem of learning multiclass classifier using diluted bandit feedbacks. To the best of our knowledge, this is the first work in this direction. Following are the key contributions in this paper:
\begin{itemize}
    \item We propose an algorithm which learns multiclass classifier using diluted bandit feedbacks. 
    \item We show that the proposed algorithm achieves sub-linear mistake bound of  $\mathcal{O}(T^{1-(m+2)^{-1}})$, where $m$ is the size of the subset predicted.
    \item We experimentally show that the proposed approach learns efficient classifiers using diluted bandit feedbacks.
\end{itemize}

The main novelty of the MC-DBF algorithm is that it is able to train even under the dilute bandit feedback setting. Comparing the dilute bandit feedback with the bandit and full information setting, we see that the amount of feedback that MC-DBF receives at the time of training is very less. In spite of this, our algorithm achieves an error rate that is comparable to that of algorithms that receive bandit feedback (Banditron) or full feedback (Perceptron) at the time of training. The dilute bandit feedback setting has been introduced for the first time in our paper and this algorithm is the first approach to train a classifier in this type of setting.

\section{Problem Setting: Diluted Bandit Feedback}

We now formally describe the problem statement for our multi-class classification with diluted bandit feedback. The classification is done in a sequence of rounds. At each round $t$, the algorithm observes an instance $\mathbf{x}^t \in \mathbb{R}^d$. The algorithm predicts a set of labels $\Tilde{Y}^t \subset \{1,\ldots,k\}$ such that $|\Tilde{Y}^t |=m$.
After predicting the set of labels, we observe the feedback $\mathbb{I}{\{y^t \in \Tilde{Y}^t\}}$ where $y^t$ is the true label corresponding to the input $\mathbf{x}^t$. $\mathbb{I}{\{y^t \in \Tilde{Y}^t\}}$ is $1$ if $y^t \in \Tilde{Y}^t$ else $0$. In this type of bandit feedback, the classifier receives the information if the predicted set contains the correct label or not. There are two possibilities of the value of $m$.
\begin{itemize}
    \item Case 1 ($m=1$): Here, $\Tilde{Y}^t=\Tilde{y}^t$ and the feedback reduces to $\mathbb{I}{\{y^t = \Tilde{y}^t\}}$ which is discussed in \cite{Kakade2008,10.5555/2986459.2986559,pmlr-v129-arora20a}. Thus, when $\mathbb{I}{\{y^t = \Tilde{y}^t\}}=1$, we know the true label. On the other hand, $\mathbb{I}{\{y^t = \Tilde{y}^t\}}=0$, the true label can be anything among $[k]\setminus \Tilde{y}_t$.
    \item Case 2 ($1<m<k$): Here, the uncertainty is present in both possibilities of the feedback $\mathbb{I}{\{y^t \in \Tilde{Y}^t\}}$. When $\mathbb{I}{\{y^t \in \Tilde{Y}^t\}}=1$, then the true label could be anything among the labels in the set $\Tilde{Y}^t$. When $\mathbb{I}{\{y^t \in \Tilde{Y}^t\}}=0$, the true label lies in the set $[k]\setminus \Tilde{Y}^t$. Thus, in both possibilities of the feedback, there is ambiguity about the true label. Hence the name {\bf diluted bandit feedback}.
\end{itemize}
This paper is mainly concerned about Case 2, where $1<m<k$ (diluted bandit feedback setting). The algorithm's final goal is to minimize the number of prediction mistake $\Hat{M}$ as defined below. 
\begin{equation}
        \begin{split}
            \Hat{M} := \sum_{t=1}^{T} \mathbb{I}{\{y^t \notin \Hat{Y}^t\}}
        \end{split}
\end{equation}
To the best of our knowledge, this is first time diluted bandit feedback setting has been discussed.

\section{Proposed Approach}
The algorithm tries to learn a linear classifier parameterized by a weight matrix $W \in \mathbb{R}^{k\times d}$. 
To formulate the algorithm which learns using diluted bandit feedback, let us first look at a simple full information approach.

\subsection{Multiclass Algorithm with Subset Label Prediction: A Full Information Approach} Consider the approach where the algorithm can predict a subset of labels. We first define {\em label set prediction function}. 
\begin{definition}
{\bf Label Set Prediction Function} $\hat{Y}(\mathbf{x},W)$:  Given an example $\mathbf{x}$ and a weight matrix $W\in \mathbb{R}^{k\times d}$, we denote predicted label set of size $m$ as $\hat{Y}(\mathbf{x},W)$. We define
 $\hat{Y}(\mathbf{x},W):= \{a_1,\dots,a_m\}$
where 
$$a_i = \underset{j \in [k]\setminus\{a_1,\dots,a_{i-1}\}}{\arg\max}\;(W\mathbf{x})_j.$$
\end{definition}

\begin{algorithm}[t]
\SetAlgoLined
 Parameters: $\gamma \in$ (0,1.0)\;
 Initialize $W^{1}$ = 0 $\in \mathbb{R}^{k\times d}$ \;
 \For{t= 1,\dots,T}
 {
    Receive $\mathbf{x}^{t} \in R^{d}$\;
    Predict $\Hat{Y}(\mathbf{x}^t,W^t)$ and receive feedback $y^t$\;
    Define $U^{t}_{r,j} = x^{t}_{j}\Big(\mathbb{I}{\{r = y^t\}} - \frac{1}{m}\mathbb{I}\{r \in \Hat{Y}(\mathbf{x}^t,W^t)\}\Big)$\;
    Update: $W^{t+1} = W^{t}+\Tilde{U}^{t}$\;
 }
 \caption{MC-SLP: Multiclass Classification using Subset Label Prediction}
 \label{alg:MCSLP}
\end{algorithm}
Thus, $\Hat{Y}(\mathbf{x},W)$ predicts top $m$-labels based on $m$-largest values in the vector $W\mathbf{x}$. Then we observe the true label $y\in [k]$.\footnote{Note that this setting is exactly opposite to the partial label setting \cite{Bhattacharjee2020,Arora2021}. In the partial label setting, ground truth is a labelled subset, and the algorithm predicts a single label.} We use following variant of 0-1 loss to capture the discrepancy between the true label ($y$) and the predicted label set $\Hat{Y}(\mathbf{x}, W)$.
\begin{equation}
\label{eq:partial-0-1-loss}
    \begin{split}
        L_{0-1} = \mathbb{I}{\{y \notin \hat{Y}(\mathbf{x},W)\}}
    \end{split}
\end{equation}
But, this loss is not continuous. So, we use following average hinge loss as a surrogate loss function,
\begin{equation}
        L_{avg}(W,(\mathbf{x},y)) = [1-(W\mathbf{x})_{y}+\frac{1}{m}\sum_{i \in \hat{Y}(\mathbf{x},W)} (W\mathbf{x})_i]_{+}
\end{equation}
where $[A]_{+} = A$ if $A > 0$ else $0$. It is easy to see that $L_{avg}$ is upper bound to $L_{0-1}$.
\begin{lemma}
$L_{avg}$ is an upper-bound on $\mathbb{I}{\{y \in \Hat{Y}(\mathbf{x},W)\}}$, that is $L_{avg} \geq \mathbb{I}{\{y \notin \Hat{Y}(\mathbf{x},W)\}}$.
\end{lemma}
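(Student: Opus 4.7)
The plan is to do a straightforward case split on whether the true label $y$ lies in the predicted set $\hat{Y}(\mathbf{x},W)$ or not, and use the defining property that $\hat{Y}(\mathbf{x},W)$ consists of the top-$m$ scoring labels under $W\mathbf{x}$.

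First, in the easy case where $y \in \hat{Y}(\mathbf{x},W)$, the indicator $\mathbb{I}\{y \notin \hat{Y}(\mathbf{x},W)\}$ equals $0$, so since the hinge $[\,\cdot\,]_+$ is nonnegative we automatically get $L_{avg} \ge 0 = \mathbb{I}\{y \notin \hat{Y}(\mathbf{x},W)\}$.

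The substantive case is $y \notin \hat{Y}(\mathbf{x},W)$, where we need to show $L_{avg} \ge 1$. The key observation is that, by definition of $\hat{Y}(\mathbf{x},W)$ as the indices of the $m$ largest entries of $W\mathbf{x}$, every $i \in \hat{Y}(\mathbf{x},W)$ satisfies $(W\mathbf{x})_i \ge (W\mathbf{x})_y$, since $y$ is not among the top-$m$ scorers. Averaging this inequality over the $m$ members of $\hat{Y}(\mathbf{x},W)$ gives
\begin{equation*}
\frac{1}{m}\sum_{i \in \hat{Y}(\mathbf{x},W)} (W\mathbf{x})_i \;\ge\; (W\mathbf{x})_y,
\end{equation*}
hence $1 - (W\mathbf{x})_y + \tfrac{1}{m}\sum_{i \in \hat{Y}(\mathbf{x},W)}(W\mathbf{x})_i \ge 1 > 0$, so the hinge is inactive and $L_{avg} \ge 1$, matching the indicator.

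There is no real obstacle here; the only thing to be careful about is the direction of the averaging inequality and explicitly invoking that $\hat{Y}(\mathbf{x},W)$ is top-$m$ (so that $y \notin \hat{Y}$ forces $(W\mathbf{x})_y$ to be dominated by each of the $m$ selected scores). Combining the two cases yields $L_{avg} \ge \mathbb{I}\{y \notin \hat{Y}(\mathbf{x},W)\}$ pointwise, which is the claim.
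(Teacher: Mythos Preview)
Your proof is correct and follows essentially the same approach as the paper: a two-case split on whether $y \in \hat{Y}(\mathbf{x},W)$, using nonnegativity of the hinge in the first case and the top-$m$ property of $\hat{Y}(\mathbf{x},W)$ to bound the average score above $(W\mathbf{x})_y$ in the second.
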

An online algorithm in this setting can be easily derived by using stochastic gradient descent on the loss $L_{avg}$. We call it MC-SLP (multiclass classification using subset label prediction). The algorithm works as follows. At trial $t$ we observe example $\mathbf{x}^t$. We predict the label set $\hat{Y}(\mathbf{x}^t,W^t)$ using the existing parameters $W^t$. Then we observe the true label $y^t$. We update the parameters using stochastic gradient descent on the loss $L_{avg}$ which results in the update equation $W^{t+1}=W^{t} + U^t$ where $U^{t}$ is described as follows.
\begin{equation}
\label{eq:update1}
    \begin{split}
        U^{t}_{r,j} = x^{t}_{j} \Big(\mathbb{I}{\{r = y_{t}\}} - \frac{\mathbb{I}{\{r \in \hat{Y}(\mathbf{x}^t,W^t)\}}}{m}\Big).
    \end{split}
\end{equation}
We repeat this process for $T$ number of trials.
The complete description of this approach is given in Algorithm~\ref{alg:MCSLP}. Note that MC-SLP is a full information type algorithm as we get access to the true label for each example. The following is true for MC-SLP (Algorithm~\ref{alg:MCSLP}).
\begin{lemma}
Let $W^t$ be the weight matrix in the beginning of trial $t$ of MC-SLP and $U^t$ be the update matrix in trial $t$ by MC-SLP. Let $\langle W^t,U^t \rangle=\sum_{r=1}^k\sum_{j=1}^d W^t_{r,j}U^t_{r,j}$ (matrix inner product). Then,
\begin{equation*}
    \begin{split}
        L_{avg}(W^t,(\mathbf{x}^t,y^t)) \geq \mathbb{I}{\{y^t \notin \Hat{Y}(\mathbf{x}^t,W^t)\}} \;-\; \langle W^t,U^t \rangle
    \end{split}
\end{equation*}
 \end{lemma}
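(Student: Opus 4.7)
The plan is to reduce both sides of the claimed inequality to a single real number via a clean algebraic rewrite of $\langle W^t, U^t\rangle$, and then finish with a one-line application of the identity $[A]_+ \ge A$.

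First I would expand the matrix inner product using the definition of $U^t$ from Equation~\ref{eq:update1}:
\begin{equation*}
\langle W^t, U^t\rangle \;=\; \sum_{r=1}^k \sum_{j=1}^d W^t_{r,j}\, x^t_j \left(\mathbb{I}\{r=y^t\} - \tfrac{1}{m}\mathbb{I}\{r \in \hat{Y}(\mathbf{x}^t,W^t)\}\right).
\end{equation*}
Collapsing the $j$-sum into $(W^t\mathbf{x}^t)_r$ and then separating the two indicator terms in $r$ yields
\begin{equation*}
\langle W^t, U^t\rangle \;=\; (W^t\mathbf{x}^t)_{y^t} \;-\; \tfrac{1}{m}\sum_{i\in \hat{Y}(\mathbf{x}^t,W^t)} (W^t\mathbf{x}^t)_i.
\end{equation*}
Comparing with the definition of $L_{avg}$, the expression inside the hinge is exactly $1-\langle W^t,U^t\rangle$, so
$L_{avg}(W^t,(\mathbf{x}^t,y^t)) = [\,1 - \langle W^t,U^t\rangle\,]_+.$
This rewrite is the only real content of the proof; once it is in hand, everything else is bookkeeping.

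To finish, I would note that the indicator $\mathbb{I}\{y^t \notin \hat{Y}(\mathbf{x}^t,W^t)\}$ is at most $1$, so $\mathbb{I}\{y^t \notin \hat{Y}(\mathbf{x}^t,W^t)\} - \langle W^t,U^t\rangle \le 1 - \langle W^t,U^t\rangle$. Since the hinge function $[\cdot]_+$ is monotone non-decreasing and dominates its argument, one gets
\begin{equation*}
L_{avg}(W^t,(\mathbf{x}^t,y^t)) \;=\; [\,1 - \langle W^t,U^t\rangle\,]_+ \;\ge\; [\,\mathbb{I}\{y^t \notin \hat{Y}(\mathbf{x}^t,W^t)\} - \langle W^t,U^t\rangle\,]_+ \;\ge\; \mathbb{I}\{y^t \notin \hat{Y}(\mathbf{x}^t,W^t)\} - \langle W^t,U^t\rangle,
\end{equation*}
which is exactly the claim.

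The step I expect to be the only place where a reader might stumble is the algebraic collapse of $\langle W^t, U^t\rangle$: one must be careful that the two indicator pieces separate cleanly, that the $\frac{1}{m}$ factor is preserved, and that the sign of the hinge argument lines up with the $-(W\mathbf{x})_y + \frac{1}{m}\sum_{i\in\hat{Y}}(W\mathbf{x})_i$ appearing inside $L_{avg}$. No analytic work, case-splitting on the sign of the hinge, or appeal to the earlier lemma bounding $L_{avg}\ge \mathbb{I}\{y\notin\hat{Y}\}$ is needed; the inequality falls out as soon as both sides are expressed in terms of the single scalar $\langle W^t,U^t\rangle$.
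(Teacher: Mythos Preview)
Your proof is correct and takes a cleaner route than the paper. Both arguments begin with the same algebraic identity
\[
\langle W^t, U^t\rangle \;=\; (W^t\mathbf{x}^t)_{y^t} - \tfrac{1}{m}\sum_{i\in \hat{Y}(\mathbf{x}^t,W^t)} (W^t\mathbf{x}^t)_i,
\]
which rewrites $L_{avg}$ as $[\,1-\langle W^t,U^t\rangle\,]_+$. From that point the paper performs a three-way case split (on whether $y^t\in\hat{Y}$ and on the sign of the hinge argument) and verifies the inequality separately in each branch. You instead finish in one stroke via $\mathbb{I}\{y^t\notin\hat{Y}\}\le 1$, monotonicity of $[\cdot]_+$, and $[A]_+\ge A$. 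Your argument is shorter and avoids the case analysis entirely; the paper's version is more explicit but is ultimately re-deriving the same two elementary properties of the hinge by hand in each case.
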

This lemma gives us a lower bound on the loss $L_{avg}$ computed for example $\mathbf{x}^t$ at trial $t$. This is a useful result which we will need later.
\subsection{MC-DBF: Multiclass Learning with Diluted Bandit Feedback}

\begin{algorithm}[t]
\SetAlgoLined
 Parameters: $\gamma \in$ (0,1.0)\;
 Initialize $W^1= \mathbf{0}^{d\times k}$\;
 \For{$t= 1,\dots,T$}
 {
    Receive $\mathbf{x}^{t} \in \mathbb{R}^{d}$\;
    Find $\Hat{Y}(\mathbf{x}^t,W^t)$\;
    Define $\mathbb{P}(r) := \frac{(1-\gamma)}{m}\mathbb{I}{\{r \in \hat{Y}(\mathbf{x}^t,W^t)\}} + \frac{\gamma}{k},\; \forall r \in [k]$\;
    Define $Z(A) = \mathbb{P}(b_1)\mathbb{P}(b_2|b_1)\dots \mathbb{P}(b_m|b_1,\dots,b_{m-1}),\;\forall A = \{b_1,\dots,b_m\}\in \mathbb{S}$\;
    Randomly sample $\Tilde{Y}^t$ according to $Z$\; 
    Predict $\Tilde{Y}^t$ and receive feedback $\mathbb{I}{\{y^t \in \Tilde{Y}^t\}}$\;
    Compute $\forall r\in [K]$ and $\forall j\in[d]$\; $\Tilde{U}^{t}_{r,j} = x^{t}_{j}\left[\frac{\mathbb{I}{\{y^t \in \Tilde{Y}^t\}}\mathbb{I}{\{r \in \Tilde{Y}^t\}}}{Z(\Tilde{Y}^t)\tau_1} - \tau_2 - \frac{\mathbb{I}\{r \in \Hat{Y}(\mathbf{x}^t,W^t)\}}{m}\right]$\;
    Update: $W^{t+1} = W^{t}+\Tilde{U}^{t}$\;
 }
 \caption{MC-DBF: Multiclass Classification Using Diluted Bandit Feedback}
 \label{alg:MC-DBF}
\end{algorithm}
We now describe the algorithm for learning using diluted bandit feedback. Here, for each example $\mathbf{x}^t$, we do not receive the true label $y^t$. We instead receive the feedback whether $y^t$ lies in the predicted label set $\Tilde{Y}^t$ (i.e. $\mathbb{I}\{y^t \in \Tilde{Y}^t\}$). The algorithm works as follows. 

At each iteration $t$, we receive $\mathbf{x}^t$ as input. We find $\Hat{Y}(\mathbf{x}^t,W^t)= \{a_1,a_2,\dots,a_m\}$
where 
$$a_i = \underset{j \in [k]\setminus\{a_1,\dots,a_{i-1}\}}{\arg\max}\;(W^t\mathbf{x}^t)_j.$$
We define probability distribution $\mathbb{P}$ on individual class labels as follows.
\begin{equation}
\label{eq:arm-prob}
    \begin{split}
        \mathbb{P}(r) = \frac{(1-\gamma)}{m} \mathbb{I}{\{r \in \Hat{Y}(\mathbf{x}^t,W^t)\}} + \frac{\gamma}{k},\; \forall r \in [k]
    \end{split}
\end{equation}
Here, $\gamma$ is the exploration parameter.
Let $\mathbb{S}$ denote the set of all $m$ size subsets of $\{1,\ldots,k\}$. We call them {\bf superarms} of size $m$. Now, we define probability distribution $Z$ on the set $\mathbb{S}$ as follows. For all $A=\{b_1,\dots,b_m\}\in \mathbb{S}$, we define
\begin{equation*}
        Z(A) =  \mathbb{P}(b_1)\mathbb{P}(b_2|b_1)\dots \mathbb{P}(b_m|b_1,\dots,b_{m-1}),
\end{equation*}
where $\mathbb{P}(b_i | b_{1},\dots,b_{i-1}) = \frac{\mathbb{P}(b_i)}{(1-\mathbb{P}(b_{1})\;-\;\cdots\;-\;\mathbb{P}(b_{i-1}))}$. 
$Z(A)$ is the probability of choosing $b_1,b_2,\dots,b_m$ from the set $[k]$ without replacement.\footnote{We see that $\sum_{A}Z(A) = 1$ as follows.
\begin{align*}
        \sum_{A} Z(A) &= \sum_{A} \mathbb{P}(b_1)\dots \mathbb{P}(b_m|b_1,b_2,\dots,b_{m-1})\\
        {} &= \sum_{b_1}\mathbb{P}(b_1)\dots\sum_{b_m}\frac{\mathbb{P}(b_m)}{(1-\mathbb{P}(b_1)\dots-\mathbb{P}(b_{m-1}))}
\end{align*}
But,  $\sum_{b_i} \frac{\mathbb{P}(b_i)}{(1-\mathbb{P}(b_1)-\dots-\mathbb{P}(b_{i-1})} = 1$. Thus, $\sum_{A}Z(A) = 1$.}

We randomly sample $\Tilde{Y}^t$ from $Z$ and predict $\Tilde{Y}^t$. We receive the diluted bandit feedback $\mathbb{I}{\{y^t \in \Tilde{Y}^t\}}$. We update the weight matrix as $W^{t+1}=W^t+\Tilde{U}^t$, where
\begin{align}
\label{eq:update-matrix} 
     \Tilde{U}^{t}_{r,j}  = x^{t}_{j}\Big(\frac{\mathbb{I}{\{y^t \in \Tilde{Y}^t\}}\mathbb{I}{\{r \in \Tilde{Y}^t\}}}{Z(\Tilde{Y}^t)\tau_1}  - \frac{\mathbb{I}\{r \in \Hat{Y}(\mathbf{x}^t,W^t)\}}{m}
  - \tau_2\Big)
\end{align}
where $\tau_{1}= m\;\Myperm[k-2]{m-1}$ and 
$\tau_{2} = \frac{m-1}{k-m}$.
$\Tilde{U}^{t}$ accesses $y^t$ only through the indicator $\mathbb{I}{\{y^t \in \Tilde{Y}^t\}}$. We will see that $\Tilde{U}^t$ is an unbiased estimator of $U^t$ using the following Lemma.

\begin{lemma}
\label{lemma:unbiased-estimator}
Consider $\Tilde{U}^t$ defined in Eq.(\ref{eq:update-matrix}). Then, $\mathbb{E}_{Z}[\Tilde{U}^t] = U^t$, where $U^t$ is defined in Eq.(\ref{eq:update1}).
\end{lemma}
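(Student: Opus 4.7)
The plan is to verify the equality entrywise. Fix indices $r \in [k]$ and $j \in [d]$; the scalar $x^t_j$ is deterministic and factors out of the expectation, and the middle term $-\mathbb{I}\{r\in\hat{Y}(\mathbf{x}^t,W^t)\}/m$ inside the brackets does not depend on $\tilde{Y}^t$, so it already matches the corresponding term of $U^t_{r,j}$. What remains is to show
\begin{equation*}
  \mathbb{E}_Z\!\left[\frac{\mathbb{I}\{y^t \in \tilde{Y}^t\}\,\mathbb{I}\{r \in \tilde{Y}^t\}}{Z(\tilde{Y}^t)\,\tau_1}\right] \;-\;\tau_2 \;=\; \mathbb{I}\{r = y^t\}.
\end{equation*}

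The key observation is a standard importance-sampling cancellation. Because $\tilde{Y}^t$ is drawn from $Z$ over ordered $m$-tuples $A=(b_1,\dots,b_m)$ of distinct elements of $[k]$ (this is exactly the sequential sampling procedure defining $Z$, as the footnote's normalization calculation already uses), expanding the expectation as a sum over $A$ produces a factor $Z(A)$ in the outer weight that cancels the $Z(\tilde{Y}^t)$ in the denominator. The expectation therefore collapses to the pure counting quantity
\begin{equation*}
  \frac{1}{\tau_1}\,\bigl|\{A : y^t \in A,\; r \in A\}\bigr|,
\end{equation*}
independent of $\gamma$ and of $W^t$. This is precisely why the estimator is constructed with the weight $1/Z(\tilde{Y}^t)$.

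I then split on whether $r = y^t$. If $r = y^t$, the ordered $m$-tuples containing $y^t$ number $m\cdot{}^{k-1}P_{m-1}$ (choose the slot for $y^t$, then arrange $m-1$ of the remaining $k-1$ elements). Dividing by $\tau_1 = m\cdot{}^{k-2}P_{m-1}$ and simplifying the ratio of falling factorials yields $(k-1)/(k-m)$. If $r\neq y^t$, the ordered tuples containing both $r$ and $y^t$ number $m(m-1)\cdot{}^{k-2}P_{m-2}$, and the analogous simplification yields $(m-1)/(k-m)$. Subtracting $\tau_2 = (m-1)/(k-m)$ in each case leaves $1$ when $r=y^t$ and $0$ when $r \neq y^t$, i.e.\ $\mathbb{I}\{r = y^t\}$, which is exactly the remaining term of $U^t_{r,j}$.

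The main obstacle is bookkeeping rather than conceptual: being careful that $Z$ assigns probability to \emph{ordered} tuples (so the counts and the cancellation are consistent), and then recognizing the two falling-factorial simplifications that pick out the stated $\tau_1$ and $\tau_2$ as the unique constants making both cases collapse to the correct indicator. Once the cancellation through $Z(A)$ is exploited, everything else is a short arithmetic check.
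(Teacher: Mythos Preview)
Your argument is correct and follows essentially the same route as the paper: expand the expectation over ordered $m$-tuples so that $Z(A)$ cancels, reduce to counting tuples containing $y^t$ (and possibly $r$), split on $r=y^t$ versus $r\neq y^t$, and verify that dividing by $\tau_1$ and subtracting $\tau_2$ yields the indicator. The only cosmetic difference is that the paper first uses $\mathbb{I}\{r=y^t\}+\mathbb{I}\{r\neq y^t\}=1$ to rewrite the count as $m\,{}^{k-2}P_{m-1}\,\mathbb{I}\{r=y^t\}+m(m-1)\,{}^{k-2}P_{m-2}$ before dividing, whereas you handle the two cases separately throughout.
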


\section{Mistake Bound Analysis of MC-DBF}
In this section, we derive the expected mistake bound ($\mathbb{E}_{Z}[\Hat{M}]$) for the proposed approach MC-DBF (Algorithm~\ref{alg:MC-DBF}). 
To get the mistake bound, we first need to derive some intermediate results. We first derive an upper bound the expected value of the Frobenius norm of the update matrix $\Tilde{U}^t$.

\begin{lemma}
       \begin{align*}
    &\mathbb{E}_{Z}[||\Tilde{U}^t||^2_{F}] \leq    ||\mathbf{x}^t||^2_2\Big[\frac{mk^m\;\Myperm[k]{m}}{\gamma^m\tau_1^2 }-\frac{2m\tau_2}{\tau_1} -\frac{2}{\tau_1}\\
    & + {k}[\tau_2^2+\frac{1}{km}+\frac{2\tau_2}{k}]\Big]+\frac{2 ||\mathbf{x}^t||^2_2}{\tau_1}\mathbb{I}\{y^t\notin \Hat{Y}(\mathbf{x}^t,W^t)\}
\end{align*}

\end{lemma}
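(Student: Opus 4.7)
The plan is to decompose $\|\tilde{U}^t\|_F^2$ entrywise, separate the random and deterministic parts of $\tilde U^t_{r,j}$, and take expectations term by term. Writing
$$V_r := \frac{\mathbb{I}\{y^t\in\tilde Y^t\}\,\mathbb{I}\{r\in\tilde Y^t\}}{Z(\tilde Y^t)\,\tau_1}, \qquad W_r := \frac{\mathbb{I}\{r\in\hat Y(\mathbf{x}^t,W^t)\}}{m},$$
we have $\tilde U^t_{r,j} = x^t_j\,(V_r - W_r - \tau_2)$, hence $\|\tilde U^t\|_F^2 = \|\mathbf{x}^t\|_2^2\sum_{r=1}^k (V_r - W_r - \tau_2)^2$. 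Expanding the square produces six sums: $\sum_r V_r^2$, $\sum_r W_r^2$, $k\tau_2^2$, and three cross terms. Only $V_r$ is random under $Z$, so each cross term is either linear in $\mathbb{E}_Z[V_r]$ or purely deterministic.

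For the linear and deterministic pieces I would use $\sum_r W_r = 1$ and $\sum_r W_r^2 = 1/m$, and combine the identity $\tilde U^t_{r,j}/x^t_j = V_r - W_r - \tau_2$ with Lemma~\ref{lemma:unbiased-estimator} (whose conclusion $\mathbb{E}_Z[\tilde U^t_{r,j}] = x^t_j(\mathbb{I}\{r=y^t\} - W_r)$ is linear in $V_r$) to deduce
$$\mathbb{E}_Z[V_r] \;=\; \mathbb{I}\{r=y^t\} + \tau_2.$$
This immediately gives $\sum_r \mathbb{E}_Z[V_r] = 1 + k\tau_2$ and $\sum_r W_r\,\mathbb{E}_Z[V_r] = \mathbb{I}\{y^t\in\hat Y(\mathbf{x}^t,W^t)\}/m + \tau_2$. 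Substituting into the three cross terms reduces the non-quadratic part of $\mathbb{E}_Z[\|\tilde U^t\|_F^2]/\|\mathbf{x}^t\|_2^2$ to a closed-form expression in $\tau_1,\tau_2,m,k$ and $\mathbb{I}\{y^t\in\hat Y^t\}$; rewriting $\mathbb{I}\{y^t\in\hat Y^t\} = 1 - \mathbb{I}\{y^t\notin\hat Y^t\}$ then isolates the $(2/\tau_1)\mathbb{I}\{y^t\notin\hat Y^t\}$ contribution on the right-hand side.

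The main obstacle is bounding the quadratic piece $\sum_r \mathbb{E}_Z[V_r^2]$, since squaring $V_r$ turns the importance-sampling factor $1/Z(\tilde Y^t)$ into $1/Z(\tilde Y^t)^2$, which the sampling probability only partially cancels. I would first collapse the pointwise sum: because $\mathbb{I}\{r\in\tilde Y^t\}^2 = \mathbb{I}\{r\in\tilde Y^t\}$ and $\sum_r \mathbb{I}\{r\in\tilde Y^t\} = m$,
$$\sum_r V_r^2 \;=\; \frac{m\,\mathbb{I}\{y^t\in\tilde Y^t\}}{Z(\tilde Y^t)^2\,\tau_1^2}.$$
Taking expectations yields $\sum_r \mathbb{E}_Z[V_r^2] = (m/\tau_1^2)\sum_{A:\,y^t\in A} 1/Z(A)$. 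I would then lower-bound $Z(A)\geq (\gamma/k)^m$ by noting that every conditional factor satisfies $\mathbb{P}(b_i\mid b_1,\dots,b_{i-1}) = \mathbb{P}(b_i)/(1-\sum_{j<i}\mathbb{P}(b_j)) \geq \mathbb{P}(b_i) \geq \gamma/k$, and coarsely count $|\{A:\,y^t\in A\}| \leq \Myperm[k]{m}$, which gives
$$\sum_r \mathbb{E}_Z[V_r^2] \;\leq\; \frac{m\,k^m\,\Myperm[k]{m}}{\gamma^m\,\tau_1^2}.$$

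Combining this quadratic bound with the closed-form linear and deterministic pieces, multiplying by $\|\mathbf{x}^t\|_2^2$, and regrouping the $W$-, $\tau_2$- and mistake-indicator contributions produces the claimed inequality. The sign and algebra on the cross terms is the tedious part of the bookkeeping; the one genuine inequality is the lower bound on $Z(A)$, which is precisely where the worst-case $\gamma^{-m}k^m$ blow-up enters.
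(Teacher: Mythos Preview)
Your approach is correct and in fact yields a tighter bound than the one stated; it is also genuinely different from the paper's argument. The paper does \emph{not} expand the square and take expectations termwise. Instead it performs a case split on the event $\{y^t\in\tilde Y^t\}$, computes $\|\tilde U^t\|_F^2$ conditionally in each case, and in the case $y^t\in\tilde Y^t$ it discards accuracy on the cross terms by using $Z(\tilde Y^t)\le 1$ to replace $-\tfrac{2m\tau_2}{Z(\tilde Y^t)\tau_1}$ by $-\tfrac{2m\tau_2}{\tau_1}$ and $-\tfrac{2}{Z(\tilde Y^t)\tau_1}\sum_r\mathbb{I}\{r\in\tilde Y^t\cap\hat Y^t\}$ by $-\tfrac{2}{\tau_1}\mathbb{I}\{y^t\in\hat Y^t\}$, \emph{before} taking the conditional expectation; only after that does it invoke the lower bound $Z(A)\ge(\gamma/k)^m$ on the $1/Z^2$ term and then combine the two cases via $\Gamma=\mathbb P(y^t\in\tilde Y^t)\le 1$. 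Those early $Z\le 1$ steps are exactly where the coefficients $-\tfrac{2m\tau_2}{\tau_1}$, $-\tfrac{2}{\tau_1}$, and $\tfrac{2}{\tau_1}\mathbb{I}\{y^t\notin\hat Y^t\}$ in the lemma originate. Your route bypasses the case split entirely by reusing the unbiasedness lemma to get $\mathbb E_Z[V_r]=\mathbb{I}\{r=y^t\}+\tau_2$ exactly, which is cleaner and sharper.

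One caveat: your last claim that ``the one genuine inequality is the lower bound on $Z(A)$'' is not quite right if your target is the \emph{printed} right-hand side. Carrying your computation through gives the constant block $-\tfrac{1}{m}-k\tau_2^2-2\tau_2$ and indicator coefficient $\tfrac{2}{m}$, not the stated $-\tfrac{2m\tau_2}{\tau_1}-\tfrac{2}{\tau_1}+k\tau_2^2+\tfrac{1}{m}+2\tau_2$ and $\tfrac{2}{\tau_1}$. Your expression is smaller (use $\tau_1\ge m$, so $m/\tau_1\le 1$, and check $2\tau_2\bigl(k\tau_2+2-\tfrac{m}{\tau_1}\bigr)\ge 0$), hence the lemma as stated still follows, but the final ``regrouping'' is a further weakening, not an identity. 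This is harmless for the downstream mistake-bound theorem, but you should not expect the bookkeeping to land on the displayed form without that extra step.
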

Now we derive the expected mistake bound $\mathbb{E}_{Z}[\Hat{M}]$ using Theorem as follows.

\begin{theorem}
Assume that for the sequence of examples, $(\mathbf{x}^1,y^1),\dots,(\mathbf{x}^t,y^t)$, we have, for all $t$, $\mathbf{x}^t \in \mathbb{R}^d$, $||\mathbf{x}^t|| \leq 1$ and $y^t \in [k]$.
Let $W^*$ be any matrix and let $R_T$ be the cumulative average hinge loss of $W^*$ defined as follows.
\begin{equation*}
    \begin{split}
        R_T & = \sum_{t=1}^T L_{avg}(W^*,(\mathbf{x}^t,y^t)).
    \end{split}
\end{equation*}
Let $D$ be the complexity of $W^*$ defined as below:
$ D  = 2||W^*||^2_{F}$. Then the number of mistakes $\Hat{M}$ made by the Algorithm~2 satisfies
\begin{align}
        \nonumber \mathbb{E}_{Z}[\Hat{M}] & \leq R_{T} +  \sqrt{\frac{\lambda_1DR_{T}}{2}} + 3\max\left(\frac{\lambda_1D}{2},\sqrt{\frac{(\lambda_2+1)D T}{2}}\right)\\
        \label{eq:mistake-bound}&\qquad + \gamma T.
\end{align}
where $\Hat{M} = \sum_{t=1}^{T} \mathbb{I}{\{y^t \notin \Hat{Y}(\mathbf{x}^t,W^t)\}}$, $\lambda_1 = \frac{2}{\tau_1}$ and 
\begin{equation*}
    \begin{split}
        \lambda_2 &= \big[\frac{mk^m\;\Myperm[k]{m}}{\gamma^m\tau_1^2 }-\frac{2m\tau_2}{\tau_1} -\frac{2}{\tau_1}+{k}[\tau_2^2+\frac{1}{km}+\frac{2\tau_2}{k}]\big]
    \end{split}
\end{equation*}
\end{theorem}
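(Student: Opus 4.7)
The plan is to mirror the Banditron-style mistake-bound argument of Kakade et al., adapted to size-$m$ subsets and diluted feedback. The three already-established ingredients I will use are: the sandwich lemma $L_{avg}(W^t,(\mathbf{x}^t,y^t))\geq\mathbb{I}\{y^t\notin\hat{Y}^t\}-\langle W^t,U^t\rangle$; the unbiasedness $\mathbb{E}_Z[\tilde{U}^t\mid\mathcal{F}_{t-1}]=U^t$ from Lemma~\ref{lemma:unbiased-estimator}; and the second-moment bound on $\mathbb{E}_Z[\|\tilde{U}^t\|_F^2]$. They are combined with a standard Frobenius-norm progress identity and a final quadratic-inequality inversion.

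First, expanding the potential $\Phi_t=\|W^t-W^*\|_F^2$ around the update $W^{t+1}=W^t+\tilde{U}^t$, telescoping from $t=1$ to $T$, using $W^1=\mathbf{0}$, and dropping the non-negative tail $\Phi_{T+1}$ gives the deterministic progress bound $2\sum_t\langle W^*-W^t,\tilde{U}^t\rangle\leq\|W^*\|_F^2+\sum_t\|\tilde{U}^t\|_F^2$. Taking $\mathbb{E}_Z$ and using the tower property together with the unbiasedness lemma replaces $\tilde{U}^t$ by $U^t$ in the bilinear term; applying the second-moment lemma together with $\|\mathbf{x}^t\|\leq 1$ to the quadratic term produces $\sum_t\mathbb{E}_Z[\|\tilde{U}^t\|_F^2]\leq\lambda_2 T+\lambda_1\mathbb{E}_Z[\hat{M}]$. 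Writing $D=2\|W^*\|_F^2$, the combined inequality reads
\[
2\sum_t\mathbb{E}_Z[\langle W^*-W^t,U^t\rangle]\;\leq\;\tfrac{D}{2}+\lambda_2 T+\lambda_1\mathbb{E}_Z[\hat{M}].\qquad(\star)
\]

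Second, I would lower-bound the LHS of $(\star)$ by $2(\mathbb{E}_Z[\hat{M}]-R_T-\gamma T)$. Splitting $\langle W^*-W^t,U^t\rangle=\langle W^*,U^t\rangle-\langle W^t,U^t\rangle$, the first summand satisfies the pointwise deterministic bound $\langle W^*,U^t\rangle\geq 1-L_{avg}(W^*,(\mathbf{x}^t,y^t))$ — because $\hat{Y}(\mathbf{x}^t,W^t)$ is sub-optimal for maximising $\sum_{r\in A}(W^*\mathbf{x}^t)_r$ over size-$m$ subsets $A$ — and therefore sums to at least $T-R_T$. For the second summand, I would use the sandwich lemma together with the fact that $\langle W^t,U^t\rangle\leq 0$ whenever $y^t\notin\hat{Y}^t$ (since $(W^t\mathbf{x}^t)_{y^t}$ is then below the mean of the top-$m$ values), and handle the mismatch between the deterministic top-$m$ subset $\hat{Y}^t$ appearing inside $U^t$ and the randomly-sampled subset $\tilde{Y}^t$ that is actually played — the probability of such a mismatch is controlled by $\gamma$ because $\mathbb{P}$ mixes a $(1-\gamma)$ uniform-on-$\hat{Y}^t$ component with a $\gamma$ uniform-on-$[k]$ component — to extract the additive $\gamma T$ slack. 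Substituting into $(\star)$ produces a linear inequality which, rewritten as a quadratic in $\sqrt{\mathbb{E}_Z[\hat{M}]}$ and resolved using $\sqrt{a+b}\leq\sqrt{a}+\sqrt{b}$ and $2\sqrt{xy}\leq x+y$, then collecting the two residual regimes under a single $\max$, yields the claimed bound.

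The main obstacle is the second step: producing a clean coefficient of exactly $1$ in front of $\mathbb{E}_Z[\hat{M}]$ on the LHS of $(\star)$ after subtracting $R_T+\gamma T$. The algebraic choices $\tau_1=m\cdot\Myperm[k-2]{m-1}$ and $\tau_2=(m-1)/(k-m)$ were engineered precisely so that both the sandwich lemma and the second-moment bound carry the prefactors needed to match $\lambda_1,\lambda_2$ in the final expression without extra slack. The quadratic inversion itself is mechanical.
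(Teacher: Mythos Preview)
Your potential-function route $\Phi_t=\|W^t-W^*\|_F^2$ is structurally different from the paper's argument and cannot produce the stated bound in the stated form. The paper does \emph{not} telescope $\|W^t-W^*\|_F^2$; instead it bounds $\mathbb{E}_Z[\langle W^*,W^{T+1}\rangle]$ from two sides. The lower bound uses Lemma~3 applied with $W^*$ to get $\mathbb{E}_Z[\langle W^*,W^{T+1}\rangle]=\sum_t\mathbb{E}_Z[\langle W^*,U^t\rangle]\ge\mathbb{E}_Z[\hat M]-R_T$. The upper bound uses Cauchy--Schwarz $\langle W^*,W^{T+1}\rangle\le\|W^*\|_F\|W^{T+1}\|_F$, then Jensen, then telescopes $\|W^{T+1}\|_F^2=\sum_t(2\langle W^t,\tilde U^t\rangle+\|\tilde U^t\|_F^2)$ and bounds it by $\lambda_1\mathbb{E}_Z[\hat M]+(\lambda_2+1)T$ via the second-moment lemma. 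The Cauchy--Schwarz step is precisely what introduces the square root: combining the two sides gives
\[
\mathbb{E}_Z[\hat M]-R_T\;\le\;\sqrt{\tfrac{\lambda_1 D\,\mathbb{E}_Z[\hat M]}{2}}+\sqrt{\tfrac{(\lambda_2+1)DT}{2}},
\]
a genuine quadratic in $x=\sqrt{\mathbb{E}_Z[\hat M]}$ with nonzero linear coefficient $\sqrt{\lambda_1 D/2}$. Inverting this quadratic (the paper's Proposition~\ref{prop1}) is what produces the $\sqrt{\lambda_1 D R_T/2}$ term and the $3\max(\cdot,\cdot)$ structure. Your $(\star)$, by contrast, is linear in $\mathbb{E}_Z[\hat M]$ on both sides; viewed as a quadratic in $\sqrt{\mathbb{E}_Z[\hat M]}$ it has \emph{no} linear term, so no amount of $\sqrt{a+b}\le\sqrt a+\sqrt b$ manipulation will generate $\sqrt{\lambda_1 D R_T/2}$ or $\sqrt{(\lambda_2+1)DT/2}$. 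At best you would get $\mathbb{E}_Z[\hat M]\le (2-\lambda_1)^{-1}(D/2+\lambda_2 T+2R_T+2\gamma T)$, which is not the theorem.

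There is also a real gap in your lower bound for the left side of $(\star)$. You need a lower bound on $-\sum_t\langle W^t,U^t\rangle$, but on rounds with $y^t\in\hat Y(\mathbf{x}^t,W^t)$ the quantity $\langle W^t,U^t\rangle=(W^t\mathbf{x}^t)_{y^t}-\tfrac1m\sum_{i\in\hat Y^t}(W^t\mathbf{x}^t)_i$ can be arbitrarily large positive (take $y^t$ to be the unique top score with the remaining top-$m$ scores far below it). The sandwich lemma gives only a \emph{lower} bound on $\langle W^t,U^t\rangle$, which is the wrong direction for your purpose, and the $\gamma$-mixture argument concerns $\tilde Y^t$ versus $\hat Y^t$ and says nothing about this term. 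The paper sidesteps the issue entirely: $\langle W^t,U^t\rangle$ appears only inside the \emph{upper} bound on $\|W^{T+1}\|_F^2$, where an upper estimate is what is needed, and it is absorbed into the ``$+1$'' of $(\lambda_2+1)T$; the $\gamma T$ is tacked on at the very end via the exploration slack $\mathbb{E}_Z[M]+\gamma T\ge\mathbb{E}_Z[\hat M]$.
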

We will now analyze different cases and find out the mistake bound in those cases. Before going ahead, we state a new separability definition as follows.
\begin{definition}
 {\bf Linear Separability:} A sequence of examples, $(\mathbf{x}^1,y^1),\dots,(\mathbf{x}^T,y^T)$ is linearly separable if there exists a matrix $W^* \in \mathbb{R}^{k\times d}$ such that
 $$(W^*\mathbf{x}^t)_{y^t}-(W^*\mathbf{x}^t)_i\;\geq \;1,\;\forall i\neq y^t,\;\forall t\in [T].$$
 \end{definition}
 Note that linear separability also implies $(W^*\mathbf{x}^t)_{y^t}-\frac{1}{m}\sum_{i\in \Hat{Y}(\mathbf{x},W^*)}(W^*\mathbf{x}^t)_i \geq 1,\forall t\in [T]$. Which implies $L_{avg}(W^*,(\mathbf{x}^t,y^t))=0,\;\forall t\in[T]$. Thus, when a sequence of examples $(\mathbf{x}^1,y^1),\dots,(\mathbf{x}^T,y^T)$ is linearly separable with respect to a weight matrix $W^*$, then 
 $$R_T=\sum_{t=1}^TL_{avg}(W^*,(\mathbf{x}^t,y^t))=0.$$
 \begin{corollary}
 Let $(\mathbf{x}^1,y^1),\dots,(\mathbf{x}^T,y^T)$ be the sequence of examples which are linearly separable. Then algorithm MC-DBF achieves $\mathcal{O}(T^{(1-\frac{1}{(m+2)})})$ mistake bound on it.
 \end{corollary}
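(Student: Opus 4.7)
The plan is to specialize the main theorem's bound to the linearly separable case and then tune the exploration parameter $\gamma$ to minimize the resulting expression in $T$. Under linear separability with respect to $W^*$, the discussion preceding the corollary gives $L_{avg}(W^*,(\mathbf{x}^t,y^t))=0$ for every $t$, hence $R_T=0$. Plugging $R_T=0$ into the bound of Eq.~(\ref{eq:mistake-bound}) collapses the first two summands, leaving
\begin{equation*}
\mathbb{E}_{Z}[\hat{M}]\;\leq\; 3\max\!\left(\tfrac{\lambda_1 D}{2},\,\sqrt{\tfrac{(\lambda_2+1)DT}{2}}\right)+\gamma T.
\end{equation*}
Here $\lambda_1=2/\tau_1$ and $D=2\|W^*\|_F^2$ do not depend on $T$ or $\gamma$, so the first argument of the $\max$ is a constant. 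Consequently, for $T$ large enough the $\max$ is realized by the second argument, and it suffices to bound $\sqrt{(\lambda_2+1)DT/2}+\gamma T$.

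Next I would isolate the $\gamma$-dependence of $\lambda_2$. Among the four terms in $\lambda_2$, only the leading one, $m k^m\,\Myperm[k]{m}/(\gamma^m\tau_1^2)$, depends on $\gamma$; all others are constants in $k$ and $m$. Therefore $\lambda_2=C_1\gamma^{-m}+C_2$ with $C_1,C_2$ depending only on $k,m$, and we may write $\sqrt{(\lambda_2+1)DT/2}=\mathcal{O}(\sqrt{T/\gamma^m})$ with constants absorbed. The bound simplifies to
\begin{equation*}
\mathbb{E}_{Z}[\hat{M}] \;\leq\; c_1\sqrt{\frac{T}{\gamma^{m}}}\;+\;\gamma T \;+\;\mathcal{O}(1),
\end{equation*}
where $c_1$ depends only on $k,m,\|W^*\|_F$.

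The final step is to choose $\gamma$ to balance the two $T$-dependent terms. Setting $\sqrt{T/\gamma^m}\asymp \gamma T$ yields $\gamma^{m+2}\asymp 1/T$, i.e.\ $\gamma=\Theta(T^{-1/(m+2)})$. With this choice both terms equal $\Theta(T^{1-1/(m+2)})$, since $\gamma T=T^{1-1/(m+2)}$ and $\sqrt{T/\gamma^m}=\sqrt{T\cdot T^{m/(m+2)}}=T^{(m+1)/(m+2)}=T^{1-1/(m+2)}$. Combining these gives $\mathbb{E}_{Z}[\hat{M}]=\mathcal{O}(T^{1-1/(m+2)})$, as required.

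I do not expect a genuine technical obstacle here—this is a tuning argument on top of the theorem—but the one place where care is needed is verifying that the $\gamma$-independent constants in $\lambda_2$ (the $k[\tau_2^2+1/(km)+2\tau_2/k]$ and the $-2m\tau_2/\tau_1-2/\tau_1$ pieces) are indeed absorbed into $c_1$ without changing the exponent, and that the $\lambda_1 D/2$ branch of the $\max$ is truly dominated for $T\geq T_0$ with $T_0$ independent of $\gamma$ (so that the optimized $\gamma$ is consistent with this regime). Both points reduce to constant-vs-growing comparisons and can be dispatched in one line each.
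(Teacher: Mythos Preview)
Your proposal is correct and follows essentially the same route as the paper: set $R_T=0$, observe that $\lambda_2$ scales like $\gamma^{-m}$ while all other constants are $\gamma$-free, and then choose $\gamma=\Theta(T^{-1/(m+2)})$ to balance $\sqrt{T/\gamma^m}$ against $\gamma T$. The paper handles the $\max$ by an explicit two-case split (and finds the optimal $\gamma$ by differentiation rather than balancing), whereas you dismiss the $\lambda_1 D/2$ branch as an $\mathcal{O}(1)$ constant dominated for large $T$; both treatments lead to the same exponent.
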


\begin{corollary}
Moreover, if we consider $R_{T} \leq \mathcal{O}(T^{(1-\frac{1}{(m+2)})})$, by setting $\gamma = \mathcal{O}( (\frac{1}{T})^{\frac{1}{(m+2)}})$, we get that $\mathbb{E}_{Z}[\Hat{M}] \leq \mathcal{O}(T^{(1-\frac{1}{(m+2)})})$.
\end{corollary}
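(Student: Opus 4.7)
The plan is to start from the mistake bound proved in the Theorem,
\[
\mathbb{E}_{Z}[\Hat{M}] \leq R_{T} +  \sqrt{\tfrac{\lambda_1 D R_{T}}{2}} + 3\max\!\left(\tfrac{\lambda_1 D}{2},\sqrt{\tfrac{(\lambda_2+1) D T}{2}}\right) + \gamma T,
\]
and to isolate how each term scales in $T$ and $\gamma$ once we plug in $R_T = \mathcal{O}(T^{1-1/(m+2)})$ and tune $\gamma$. Note that $\lambda_1 = 2/\tau_1$ and $D = 2\|W^*\|_F^2$ are constants (independent of $T$), while $\lambda_2$ depends on $\gamma$ through its leading term $\frac{m k^m \Myperm[k]{m}}{\gamma^m \tau_1^2} = \Theta(\gamma^{-m})$; all other summands in $\lambda_2$ are constants. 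So $\lambda_2 + 1 = \Theta(\gamma^{-m})$ (for $\gamma$ small, the $\gamma^{-m}$ piece dominates), which makes $\sqrt{(\lambda_2+1)DT/2} = \Theta(\gamma^{-m/2}\sqrt{T})$.

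Next I would choose $\gamma$ to balance the two $T$-dependent terms that grow with decreasing $\gamma$ in opposite ways, namely $\gamma^{-m/2}\sqrt{T}$ and $\gamma T$. Setting $\gamma^{-m/2}\sqrt{T} = \gamma T$ yields $\gamma^{(m+2)/2} = T^{-1/2}$, i.e.\ $\gamma = T^{-1/(m+2)}$, which matches the prescribed rate $\gamma = \mathcal{O}\!\left((1/T)^{1/(m+2)}\right)$. With this choice, a direct computation gives
\[
\gamma T = T^{1 - 1/(m+2)}, \qquad \gamma^{-m/2}\sqrt{T} = T^{\,m/(2(m+2))}\cdot T^{1/2} = T^{\,(m+1)/(m+2)} = T^{1-1/(m+2)}.
\]
So both balanced terms are $\Theta(T^{1-1/(m+2)})$, and the constant term $\lambda_1 D/2$ inside the $\max$ is dominated for large $T$.

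Finally, I would verify that the remaining two terms are no larger. The first term $R_T$ is already $\mathcal{O}(T^{1-1/(m+2)})$ by hypothesis. The cross term $\sqrt{\lambda_1 D R_T/2}$ is $\mathcal{O}(\sqrt{R_T}) = \mathcal{O}(T^{(1-1/(m+2))/2}) = \mathcal{O}(T^{(m+1)/(2(m+2))})$, which is of strictly smaller order than $T^{1-1/(m+2)}$. Summing up the four bounded pieces, each is $\mathcal{O}(T^{1-1/(m+2)})$, so
\[
\mathbb{E}_{Z}[\Hat{M}] \leq \mathcal{O}\!\left(T^{1-1/(m+2)}\right),
\]
which is the claim.

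The main (minor) obstacle is bookkeeping: tracking that $\tau_1, \tau_2, k, m$ and $\|W^*\|_F$ are $T$-independent constants so that only the $\gamma^{-m}$ contribution in $\lambda_2$ competes with $\gamma T$, and then carefully carrying out the exponent arithmetic $m/(2(m+2)) + 1/2 = (m+1)/(m+2) = 1 - 1/(m+2)$ to confirm the balancing. The linearly separable corollary is in fact the special case $R_T = 0$, so the argument here is strictly a refinement of that result under the weaker assumption $R_T = \mathcal{O}(T^{1-1/(m+2)})$.
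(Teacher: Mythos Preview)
Your proposal is correct and follows essentially the same approach as the paper. The paper's own proof of this corollary simply remarks that it ``can be done in a similar manner'' as the linearly separable case (Corollary~8), where the optimal $\gamma$ is found by differentiating the bound; your balancing argument $\gamma^{-m/2}\sqrt{T}=\gamma T$ is the same optimization done more directly, and your additional check that $R_T$ and $\sqrt{\lambda_1 D R_T/2}$ are each $\mathcal{O}(T^{1-1/(m+2)})$ is exactly the extra verification needed beyond the $R_T=0$ case.
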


Thus, we see that on increasing $m$, running time complexity of the algorithm also increases.



\begin{figure*}[thb]
\centerline{\fbox{\includegraphics[width = \textwidth,height = 34em]{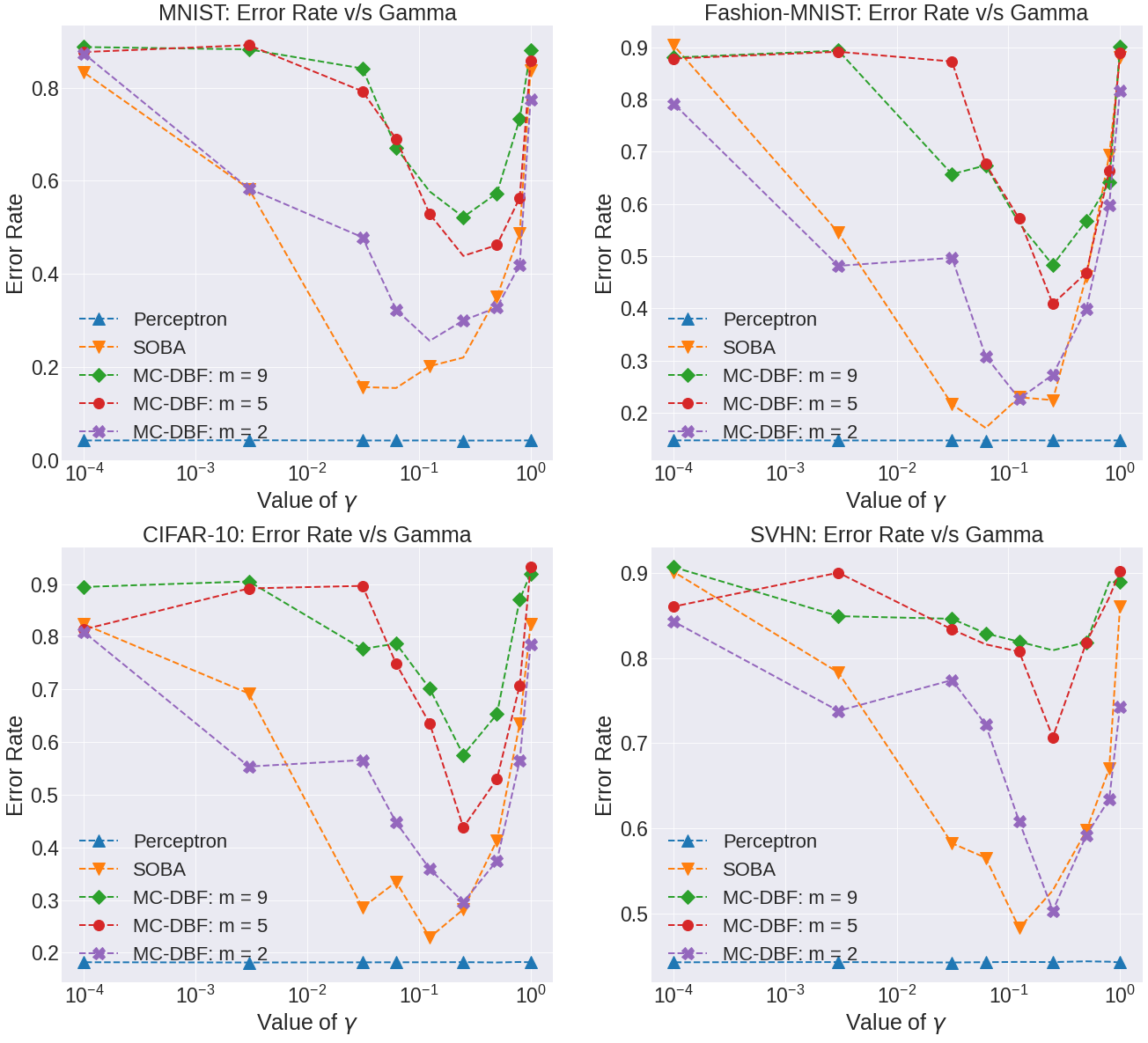}}}
\caption{Searching best value of $\gamma$: graphs show converged error rates for MC-DBF (with different values of $m$) and Second Order Banditron (SOBA) for varying values of $\gamma$. The $\gamma$ values on the X-axis are on a $\log$ scale.}
\label{fig:Optimal_Gamma}
\end{figure*}

\begin{figure*}[thb]
\centerline{\fbox{\includegraphics[width = \textwidth,height = 34em]{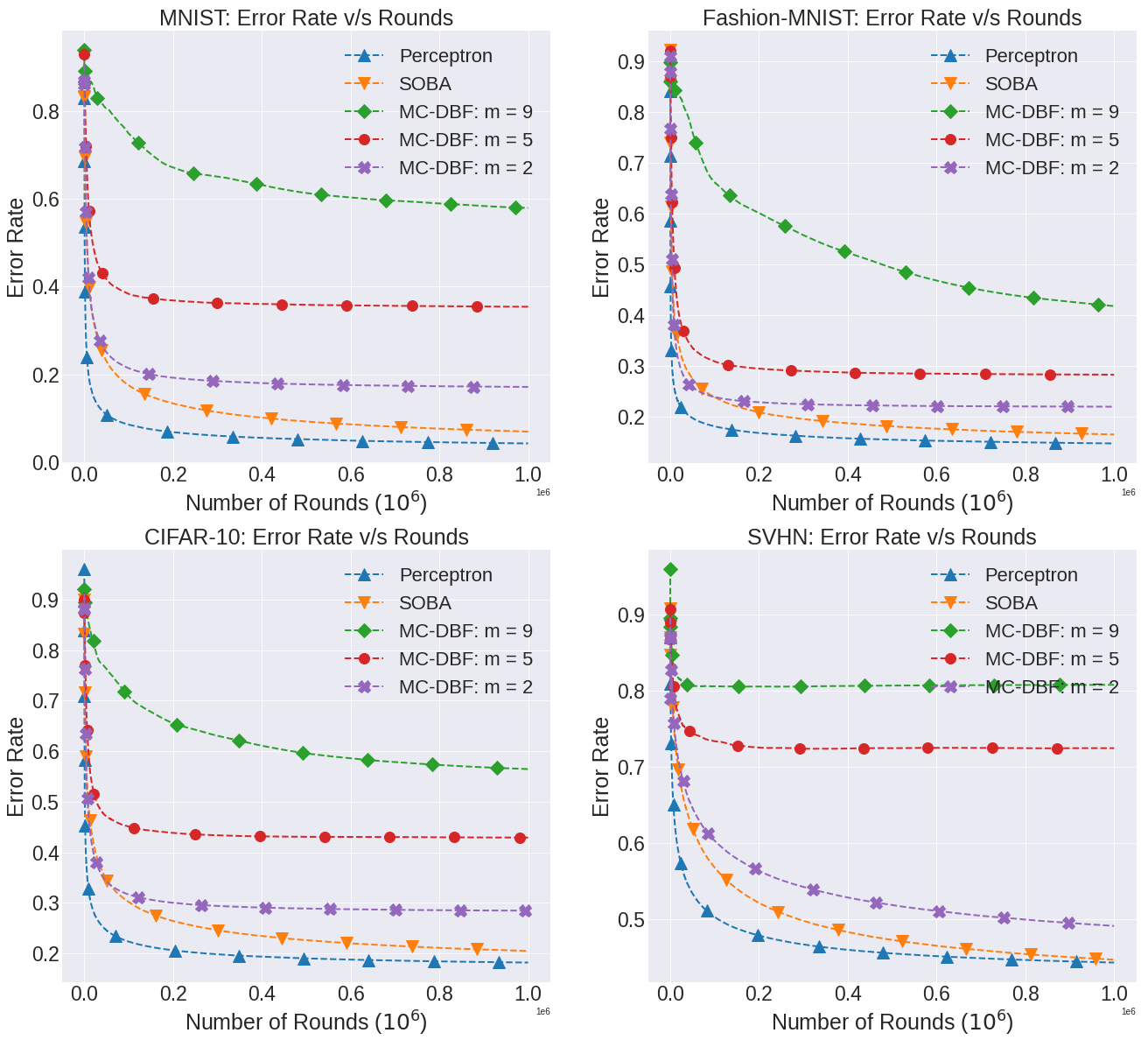}}}
\caption{Comparison of MC-DBF Algorithm using different values of $m$ with Perceptron (benchmark algorithm with full information feedback) and Second Order Banditron (SOBA) (benchmark algorithm for bandit feedback).}
\label{fig:Comparison}
\end{figure*}

\begin{figure*}[tb]
\centerline{\fbox{\includegraphics[width = \textwidth,height = 34em]{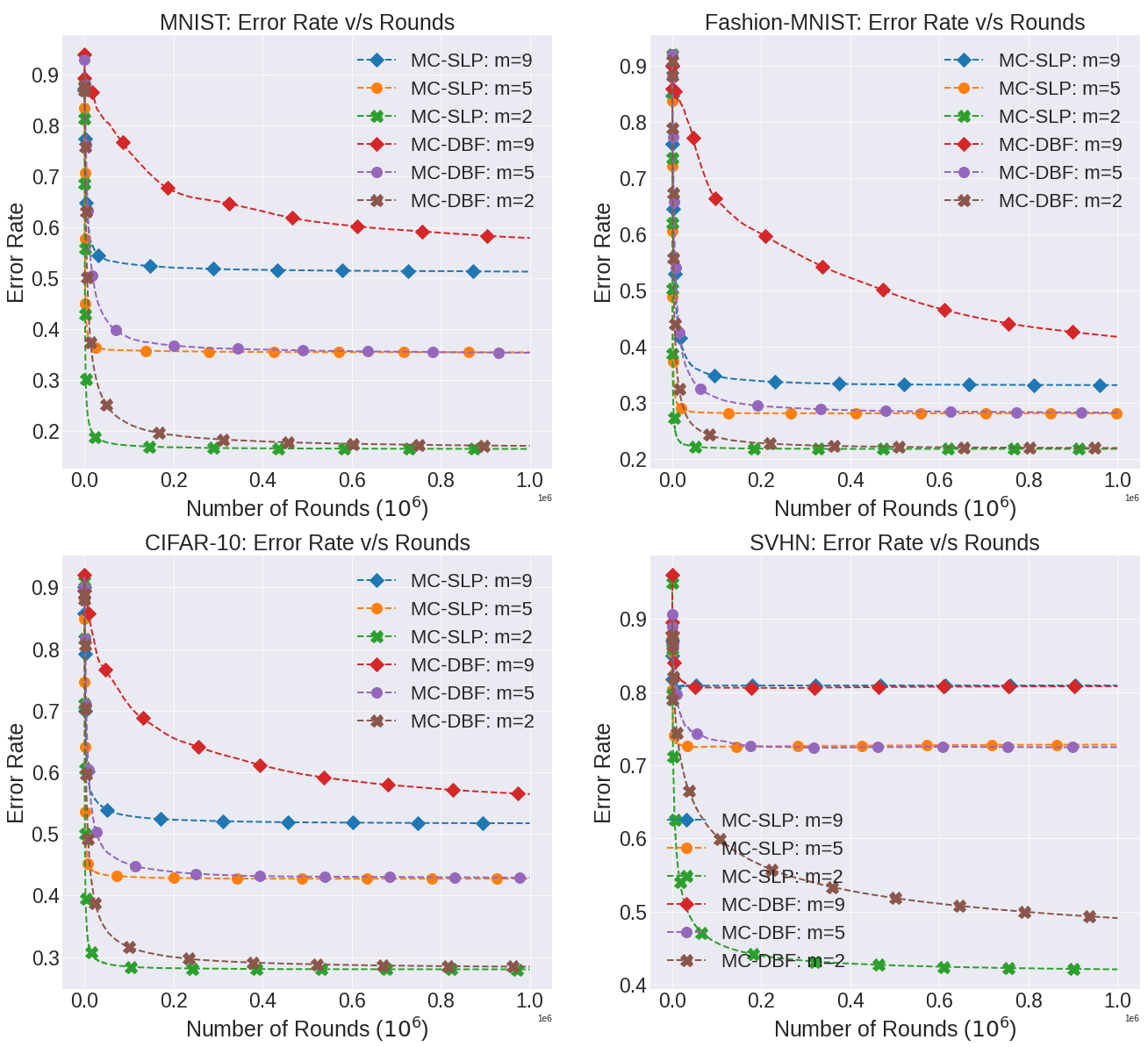}}}
\caption{Comparison of MC-DBF Algorithm with different values of $m$ with MC-SLP}
\label{fig:Convergence}
\end{figure*}

\begin{figure}[tb]
\centerline{\fbox{\includegraphics[width = 20em, height = 14em]{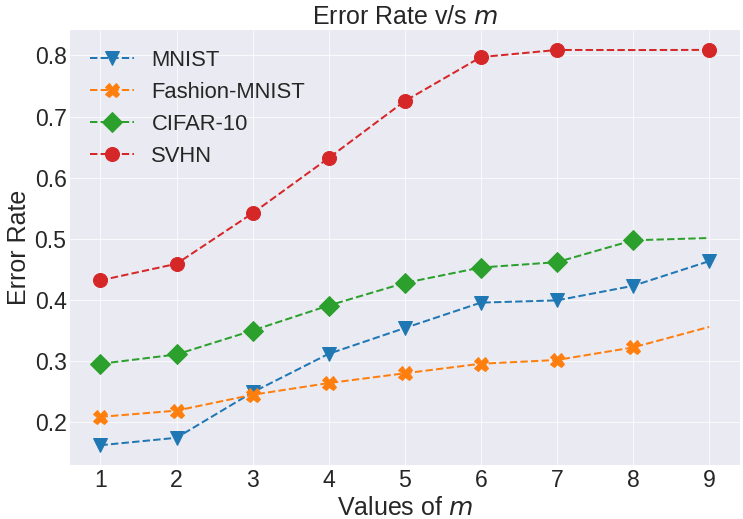}}}
\caption{Plot to show the variation of the error rate of MC-DBF algorithm with different values of $m$.}
\label{fig:Varying_m}
\end{figure}

\section{Experiments}
This section shows the experimental results of the proposed algorithm MC-DBF (Algorithm~\ref{alg:MC-DBF}) and its comparison with other benchmark algorithms on various datasets. 

\subsection{Datasets Used and Preprocessing}
We use CIFAR-10 \cite{krizhevsky2009learning}, SVHN \cite{SVHN}, MNIST
\cite{lecun2010mnist} and Fashion-MNIST \cite{xiao2017/online} datasets to show experimental results.
We use VGG-16 \cite{DBLP:journals/corr/SimonyanZ14a} model that is pre-trained on ImageNet dataset to extract the features. We use TensorFlow framework \cite{tensorflow2015-whitepaper} on all the datasets mentioned above for feature extraction using VGG-16. All the experiments have been executed on a machine with Intel (R) Xeon(R) CPU @ 2.30 GHz with 12.72 Gb of RAM.

The images are passed through the VGG-16 network, and then relevant features are extracted from the last layer of VGG-16 network. The final dimension of the features extracted for each dataset is 512.

In addition to the datasets mentioned above, we also use a synthetic linearly separable dataset SYNSEP \cite{Kakade2008} for the purpose of comparison.

\subsection{Benchmark Algorithms}
We compare the proposed approach MC-DBF with Perceptron \cite{4066017}, which is an algorithm for full information setting, Second Order Banditron (SOBA) \cite{DBLP:journals/corr/BeygelzimerOZ17}, which is a bandit feedback algorithm. We also compare with MC-SLP, which is a full information version of MC-DBF. We will use three types of values of $m$ - (low, medium, high) in MC-DBF to observe the effect of variations in $m$. 

\subsection{Experimental Setup and Performance Metric}
We run each of the algorithms mentioned above for $1,000,000$ iterations for $10$ independent runs for every dataset. In each iteration, we calculate the error rate (number of incorrectly classified samples averaged over the total number of rounds). For calculating the error rate we compare the ground truth label $y_t$, with the predicted label $\Hat{y}^t=\arg\max_{j\in[k]}\; (W^t\mathbf{x}^t)_j$. The final plots have error rate averaged over the $10$ independent trials on the Y-axis and number of trials on the X-axis.

\subsection{Choosing Optimal value of $\gamma$ in MC-DBF}
MC-DBF takes $\gamma$ as a parameter. To choose the parameter's best values, we plot the trend of error rates for varying values of $\gamma$. We choose the value of $\gamma$ for which the error rate value is minimized from these plots. While calculating the error rate, we compare the true label $y_t$ with the predicted label $\Hat{y}^t=\arg\max_{j\in[k]}\; (W^t\mathbf{x}^t)_j$. We use a similar process to get optimal values of hyper-parameters for the other benchmark algorithms.

Figure~\ref{fig:Optimal_Gamma} shows the trend of the converged error rates with $\log(\gamma)$ on all the datasets for all the algorithms. The best values of $\gamma$ for MCDBF for different datasets have been summarized in Table~\ref{tab:banditron}. All the final plots have been made using these optimal values.
\begin{table}
\centering
\begin{tabular}{|c|c|}
\hline
\multicolumn{2}{|c|}{SOBA}\\
\hline
Dataset  & $\gamma$\\
\hline
MNIST       & 0.02 \\
Fashion-MNIST            & 0.1  \\
CIFAR-10    & 0.3  \\
SVHN   & 0.25  \\
\hline
\end{tabular}
\begin{tabular}{|c|c|}
\hline
\multicolumn{2}{|c|}{MC-DBF ($m$: low)}\\
\hline
Dataset  & $\gamma$\\
\hline
MNIST       & 0.12  \\
Fashion-MNIST            & 0.12  \\
CIFAR-10    & 0.28  \\
SVHN   & 0.31  \\
\hline
\end{tabular}
\begin{tabular}{|c|c|}
\hline
\multicolumn{2}{|c|}{MC-DBF ($m$: medium)}\\
\hline
Dataset  & $\gamma$\\
\hline
MNIST       & 0.12  \\
Fashion-MNIST            & 0.2  \\
CIFAR-10    & 0.25  \\
SVHN   & 0.3  \\
\hline
\end{tabular}
\begin{tabular}{|c|c|}
\hline
\multicolumn{2}{|c|}{MC-DBF ($m$: high)}\\
\hline
Dataset  & $\gamma$\\
\hline
MNIST       & 0.25  \\
Fashion-MNIST            & 0.3  \\
CIFAR-10    & 0.3  \\
SVHN   & 0.5  \\
\hline
\end{tabular}
\caption{Optimal values of $\gamma$ for different algorithms}
\label{tab:banditron}
\end{table}

\subsection{Comparison of MC-BDF with Benchmarking algorithms}
Figure \ref{fig:Comparison} presents the comparison results of our proposed algorithm (MC-DBF) using different values of $m$ with Perceptron \cite{4066017} (full information setting) and Second Order Banditron (SOBA) \cite{DBLP:journals/corr/BeygelzimerOZ17} (bandit feedback setting).\\
We observe the MC-DBF algorithm performs well for low and medium values of $m$. Its performance is comparable to the Banditron algorithm in almost all the datasets and even comparable to Perceptron in the Fashion-MNIST and SVHN datasets for low values of $m$. We must keep in mind that increasing the value of $m$ forces the value of feedback to become more diluted, which is correctly demonstrated by high values of $m$ in the figure.

Figure \ref{fig:Convergence} show the comparison reults of our proposed algorithm MC-DBF with MC-SLF (full information version of MC-DBF) for different values of $m$. We observe that MC-DBF converges to MC-SLF in all the datasets, which is expected as according to Lemma \ref{lemma:unbiased-estimator} $E_{Z}[\Tilde{U}] = U$.

Figure \ref{fig:Linearly_Separable} shows the comparison of the various benchmark algorithms with our proposed algorithm (MC-DBF) using different values of $m$ on the SYNSEP dataset, which is a linearly separable dataset. The $x$ and $y$ axes have been plotted on a log scale. From the figure, we observe that our proposed algorithm performs comparable to Second Order Banditron (SOBA), in the linearly separable case as well. 

\subsection{Effect of changing values of $m$}
Figure \ref{fig:Varying_m} shows the trend of the error rate of MC-DBF versus $m$ for all the datasets. We observe that the error rate increases on increasing $m$ which is not surprising as increasing $m$ implies feedback to the algorithm becomes increasing dilute leading to increasing error-rate for the same number of rounds and constant $\gamma$.

\begin{figure}[tb]
\centerline{\fbox{\includegraphics[width = 20em,height = 14em]{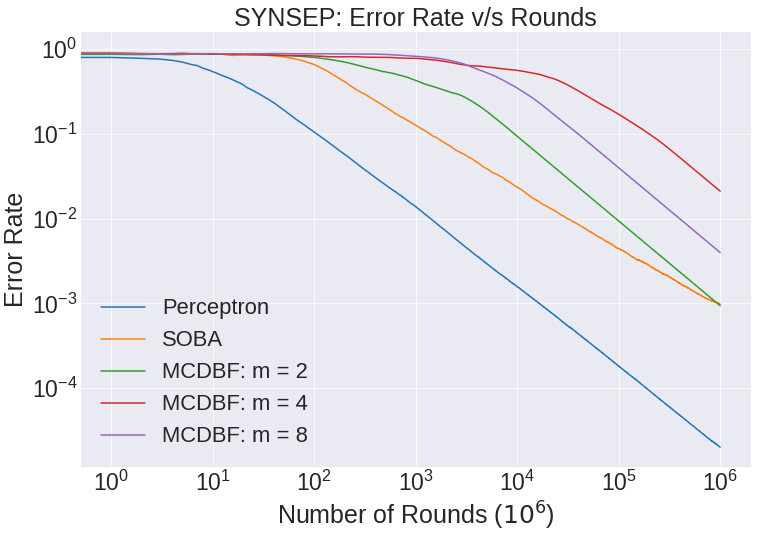}}}
\caption{Plot to show the variation of the error rate of MC-DBF algorithm with other benchmark algorithms on SYNSEP (linearly separable dataset)}
\label{fig:Linearly_Separable}
\end{figure}

\section{Conclusion}
This paper proposed a multiclass classification algorithm that uses diluted bandit feedback for training, namely MC-BDF. We used the exploration-exploitation strategy to predict a subset of labels in each trial. We then update the matrix, using an unbiased estimator of the MC-SLP update matrix (full information version of MC-DBF). We also proved the upper bound for the expected number of mistakes made by our algorithm. We also experimentally compared MC-DBF with other benchmark algorithms for the full/bandit feedback settings on various datasets. The results show that our algorithm MC-DBF performs comparably to the benchmark algorithms, despite receiving lesser feedback on most of the datasets.

\bibliography{uai2021-template}

\newpage
\title{Multiclass Classification using dilute bandit feedback - Supplementary Material}
\maketitle

\section{Proof of Lemma~2}
\label{lem: Lavg_lower_bound_loose}
There are basically two cases that we have to analyse.

\begin{itemize}
    \item $y^t \notin \Hat{Y}(\mathbf{x}^t,W^t)$: In this case, we see that $\frac{1}{m} \sum_{i \in \Hat{Y}(\mathbf{x}^t,W^t)} (W^t\cdot \mathbf{x}^t)_{i} \geq (W^t\cdot \mathbf{x}^t)_{y_t}$ due to the way $\Hat{Y}(\mathbf{x}^t,W^t)$ is generated. Thus, 
\begin{equation*}
        \begin{split}
            L_{avg} = 1-(W^t\cdot \mathbf{x}^t)_{y_t}+\frac{1}{m}\sum_{i \in \Hat{Y}(\mathbf{x}^t,W^t)} (W^t\cdot \mathbf{x}^t)_{i} \geq 1 \\
            = \mathbb{I}{\{y^t \notin \Hat{Y}(\mathbf{x}^t,W^t)\}}
        \end{split}
\end{equation*}
\item $y^t \in \Hat{Y}(\mathbf{x}^t,W^t)$:
\begin{equation*}
        \begin{split}
            L_{avg} = \Big[1-(W^t\cdot \mathbf{x}^t)_{y_t}+\frac{1}{m}\sum_{i \in \Hat{Y}(\mathbf{x}^t,W^t)} (W^t\cdot \mathbf{x}^t)_{i}\Big]_{+}\\
            \geq 0 = \mathbb{I}{\{y^t \notin \Hat{Y}(\mathbf{x}^t,W^t)\}}
        \end{split}
\end{equation*}
\end{itemize}
Therefore, $L_{avg} \geq \mathbb{I}{\{y^t \notin \hat{Y}(\mathbf{x}^t,W^t)\}}$.

\section{Proof of Lemma~3}
\label{lem: Lavg_lower_bound}
We know from the definition of $U^t$, that

\begin{equation}
    \begin{split}
        U^t_{r,j}  = x^{t}_{j} \Big(\mathbb{I}{\{r = y_{t}\}} - \frac{\mathbb{I}{\{r \in     \hat{Y}(\mathbf{x}^t,W^t)\}}}{m}\Big)
    \end{split}
    \label{Update-Matrix}
\end{equation}
Substituting this in $\langle W^t,U^t \rangle$ results in,
\begin{equation}
    \begin{split}
        \langle W^t,U^t \rangle \ = (W^t\cdot \mathbf{x}^t)_{y^t} - \frac{1}{m} \sum_{i \in \Hat{Y}(\mathbf{x}^t,W^t)} (W^t\cdot \mathbf{x}^t)_{i}
    \label{eq:Forbenium Norm of update matrix}
    \end{split}
\end{equation}

We now have three cases:

\begin{itemize}
    \item $y^t \notin \Hat{Y}(\mathbf{x}^t,W^t)$: This implies that 
    \begin{equation*}
        \begin{split}
            (W^t\cdot \mathbf{x}^t)_{i} \geq (W^t\cdot \mathbf{x}^t)_{y^t} \quad \forall i \in \Hat{Y}(\mathbf{x}^t,W^t)  
        \end{split}
    \end{equation*}
    Summing over all $i\in \Hat{Y}(\mathbf{x}^t,W^t)$, we get
    \begin{align*}
            &\frac{1}{m}\sum_{i\in \Hat{Y}(\mathbf{x}^t,W^t)}(W^t\cdot \mathbf{x}^t)_{i} \geq (W^t\cdot \mathbf{x}^t)_{y_t}  \\ 
            \Rightarrow\;\; & 1-(W^t\cdot \mathbf{x}^t)_{y_t}+\frac{1}{m}\sum_{i\in \Hat{Y}(\mathbf{x}^t,W^t)}(W^t\cdot \mathbf{x}^t)_{i} \geq
            0.
            \end{align*}
    Thus,
    \begin{equation*}
        \begin{split}
            L_{avg}(W^t,(\mathbf{x}^t,y^t)) & = 1-(W^t\cdot \mathbf{x}^t)_{y_t}+\frac{1}{m}\sum_{i \in \Hat{Y}(\mathbf{x}^t,W^t)} (W^t\cdot \mathbf{x}^t)_{i}\\ 
            & = 1 \;- \;\langle W^t,U^t \rangle\\
            & = \mathbb{I}\{y^t \notin \hat{Y}(\mathbf{x}^t,W^t)\} \;- \;\langle W^t,U^t \rangle
        \end{split}
    \end{equation*}
    
    \item $y^t \in \Hat{Y}(\mathbf{x}^t,W^t)$ and $\frac{1}{m}\sum_{i}(W^t\cdot \mathbf{x}^t)_{i} \geq (W^t\cdot \mathbf{x}^t)_{y^t}$:
    In this case also, 
    \begin{equation*}
        \begin{split}
            L_{avg}(W^t,(\mathbf{x}^t,y^t)) & = 1-(W^t\cdot \mathbf{x}^t)_{y_t}+\frac{1}{m}\sum_{i \in \Hat{Y}(\mathbf{x}^t,W^t)} (W^t\cdot \mathbf{x}^t)_{i} \\
            & = 1 - \langle W^t,U^t \rangle \\ 
            {} & \geq - \langle W^t,U^t \rangle\\
            & = \mathbb{I}\{y^t \notin \hat{Y}(\mathbf{x}^t,W^t)\} \;- \;\langle W^t,U^t \rangle
        \end{split}
    \end{equation*}
    
    \item $y^t \in \Hat{Y}(\mathbf{x}^t,W^t)$ and $\frac{1}{m}\sum_{i}(W^t\cdot \mathbf{x}^t)_{i} < (W^t\cdot \mathbf{x}^t)_{y^t}$:

    In this case, using Eq.(\ref{eq:Forbenium Norm of update matrix}), we get 
    \begin{equation*}
        \begin{split}
            \langle W^t,U^t \rangle \ > 0
        \end{split}
    \end{equation*}
    As $L_{avg}(W^t,(\mathbf{x}^t,y^t)) \geq 0$ by definition of hinge loss, we get
    \begin{equation*}
        \begin{split}
            L_{avg}(W^t,(\mathbf{x}^t,y^t)) & \geq 0\\
            {} & > -\;\langle W^t,U^t \rangle\\
            & = \mathbb{I}\{y^t \notin \hat{Y}(\mathbf{x}^t,W^t)\} \;- \;\langle W^t,U^t \rangle
        \end{split}
    \end{equation*}
\end{itemize}
Hence, using from these three cases, we infer that
\begin{equation*}
    \begin{split}
        L_{avg}(W^t,(\mathbf{x}^t,y^t)) \geq \mathbb{I}{\{y^t \notin \Hat{Y}(\mathbf{x}^t,W^t)\}} - \langle W^t,U^t \rangle
    \end{split}
\end{equation*}

\section{Proof of Lemma~4}
\label{lem: Unbiased_Estimator}
For all $r \in [k]$ and for all $j \in [d]$ we have:

\begin{align}
       \nonumber  &\mathbb{E}_{Z}[\Tilde{U}^t_{r,j}]  = \sum_{A}Z(A) x^t_{j}\Big(\frac{\mathbb{I}{\{y^t \in A\}}\mathbb{I}{\{r \in A\}}}{Z(A)(m\Myperm[k-2]{m-1})} -\frac{\mathbb{I}{\{r \in \Hat{Y}(\mathbf{x}^t,W^t)\}}}{m}\\
        \nonumber & - \frac{m-1}{k-m}\Big)\\
        \label{eq:expectation-estimator}& = x^{t}_{j}\Big(\frac{\sum_{A} \mathbb{I}{\{y^t \in A\}}\mathbb{I}{\{r \in A\}}}{m\Myperm[k-2]{m-1}} - \frac{\mathbb{I}{\{r \in \Hat{Y}(\mathbf{x}^t,W^t)\}}}{m} - \frac{m-1}{k-m}\Big)
\end{align}

Now, we simplify the term $\sum_{A}\mathbb{I}{\{y^t \in A\}}\mathbb{I}{\{r \in A\}}$. We know that $\mathbb{I}{\{r = y^t\}} + \mathbb{I}{\{r \neq y^t\}} = 1$. We use this to simplify the above expression as follows:

\begin{align*}
        \sum_{A} \mathbb{I}{\{y^t \in A\}}\mathbb{I}{\{r \in A\}} &= \sum_{A} \mathbb{I}{\{y^t \in A\}}\mathbb{I}{\{r \in A\}}\mathbb{I}{\{r = y^t\}} \\
        {} & +\sum_{A} \mathbb{I}{\{y^t \in A\}}\mathbb{I}{\{r \in A\}}\mathbb{I}{\{r \neq y^t\}}
\end{align*}

Number of sets $A$ satisfying condition $\mathbb{I}{\{y^t \in A\}}\mathbb{I}{\{r \in A\}}\mathbb{I}{\{r = y^t\}}=1$ are $m(\Myperm[k-1]{m-1})$. Also, number of sets $A$ that satisfy the condition $\mathbb{I}{\{y^t \in A\}}\mathbb{I}{\{r \in A\}}\mathbb{I}{\{r \neq y^t\}}=1$ are $m(m-1)(\Myperm[k-2]{m-2})$. Therefore,

\begin{align}
     \nonumber    &\sum_{A}\mathbb{I}{\{y^t \in A\}}\mathbb{I}{\{r \in A\}} = m(\Myperm[k-1]{m-1})\mathbb{I}{\{r = y^t\}} + \\
     \nonumber    &\qquad \qquad m(m-1)(\Myperm[k-2]{m-2})\mathbb{I}{\{r \neq y^t\}}\\
     \label{eq:Sum-Temp}   {} & = m(\Myperm[k-2]{m-1})\mathbb{I}{\{r = y^t\}} +  m(m-1)(\Myperm[k-2]{m-2})
\end{align}

We can verify the expression by setting $m=1$ in Eq.(\ref{eq:Sum-Temp}). The L.H.S. evaluates to $\mathbb{I}{\{r = y^t\}}$ as $A$ reduces to a singleton set. Similarly, the R.H.S. also reduces to $\mathbb{I}{\{r = y^t\}}$.
Plugging the value obtained in Eq.(\ref{eq:Sum-Temp}) in Eq.(\ref{eq:expectation-estimator}), we get
\begin{align*}
        &\mathbb{E}_{Z}[\Tilde{U}_{r,j}^{t}] = x^{t}_{j}\Big(\mathbb{I}{\{r = y^t\}} + \frac{m(m-1)\;\;\Myperm[k-2]{m-2}}{m\;\;\Myperm[k-2]{m-1}} - \\
        &\qquad \qquad \;\;\;\; -\frac{1}{m}\mathbb{I}{\{r \in \Hat{Y}(\mathbf{x}^t,W^t)\}} -\frac{m-1}{k-m}\Big)\\
        {} &= x^{t}_{j}\Big(\mathbb{I}{\{r = y^t\}} - \frac{1}{m}\mathbb{I}{\{r \in \Hat{Y}(\mathbf{x}^t,W^t)\}}\Big)= U^t_{r,j} \ (Eq.(\ref{Update-Matrix}))
\end{align*}
Thus, we have proved that $\mathbb{E}_{Z}[\Tilde{U}^{t}] = U^{t}$.

\section{Proof of Lemma~5}
\label{lem: Expectation_of_update}
We first find a lower bound on the probability of a superarm $A$. 
\begin{Proposition}
\label{lemma1}
Let $A = \{b_1,\cdots,b_m\} \in \mathbb{S}$ and $Z(A)$ be the probability of selecting super-arm $A$.
Then, we can show that
$Z(A) \geq \frac{\gamma}{k}\left({\frac{2}{m}}\right)^{m-1} e^{-(m-1)}.$
\end{Proposition}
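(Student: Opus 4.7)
The plan is to start from the product formula for $Z(A)$ and lower bound each of the $m$ factors separately. Writing
$Z(A)=\prod_{i=1}^{m} \mathbb{P}(b_i\mid b_1,\ldots,b_{i-1})=\prod_{i=1}^{m}\frac{\mathbb{P}(b_i)}{1-\sum_{j<i}\mathbb{P}(b_j)},$
the target bound $\frac{\gamma}{k}\bigl(\frac{2}{m}\bigr)^{m-1}e^{-(m-1)}$ suggests peeling off one factor of $\gamma/k$ coming from the exploration slot, and then getting a factor of $\frac{2}{m}\cdot e^{-1}$ from each of the remaining $m-1$ conditional probabilities.

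First I would record two universal bounds on the per-class probability $\mathbb{P}(r)=\frac{1-\gamma}{m}\mathbb{I}\{r\in\hat Y(\mathbf{x}^t,W^t)\}+\frac{\gamma}{k}$: the lower bound $\mathbb{P}(r)\geq \gamma/k$ valid for every $r$, and the upper bound $\mathbb{P}(r)\leq \frac{1-\gamma}{m}+\frac{\gamma}{k}\leq \frac{1}{m}$ valid whenever $k\geq m$. The upper bound propagates to $\sum_{j<i}\mathbb{P}(b_j)\leq (i-1)/m$, so the denominator appearing in each conditional factor satisfies $1-\sum_{j<i}\mathbb{P}(b_j)\geq (m-i+1)/m$. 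This controls the denominator product and will eventually feed into the exponential factor via the inequality $1/(1-x)\geq e^{x}$ for $x\in[0,1)$, chained over $i=2,\ldots,m$.

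Next I would designate the first index as the exploration slot and apply $\mathbb{P}(b_1)\geq \gamma/k$, producing the leading $\gamma/k$ in the claim. For the remaining $m-1$ indices I would use the stronger per-class lower bound $\mathbb{P}(b_i)\geq (1-\gamma)/m$ available whenever $b_i\in\hat Y(\mathbf{x}^t,W^t)$; combined with $(1-\gamma)/m\geq 2/(me)$, which one may rewrite as $(2/m)\cdot e^{-1}$, and stacked $m-1$ times, this yields exactly $(2/m)^{m-1}e^{-(m-1)}$. Multiplying the two contributions gives the claimed inequality. An alternative route is to keep $(1-\gamma)/m$ explicit in the numerator, absorb the denominator into an exponential through $\prod_{i=2}^{m}\frac{1}{1-\sum_{j<i}\mathbb{P}(b_j)}\geq \exp\bigl(\sum_{i=2}^{m}\sum_{j<i}\mathbb{P}(b_j)\bigr)$, and then bound $(1-\gamma)^{m-1}$ from below by a constant times $e^{-(m-1)}$ using $1-\gamma\geq e^{-\gamma}\cdot(\text{const})$ for bounded $\gamma$; either form recovers the target.

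The main obstacle will be the dependence of the argument on the composition of $A$: the clean $(1-\gamma)/m$ estimate is only valid at indices with $b_i\in\hat Y(\mathbf{x}^t,W^t)$, so when $A$ has fewer than $m-1$ elements in $\hat Y(\mathbf{x}^t,W^t)$ only the much weaker $\gamma/k$ bound is available at the corresponding slots and the straightforward product falls short of the claim. I would handle this by reordering the factorisation of $Z(A)$ so that the indices with $b_i\notin\hat Y(\mathbf{x}^t,W^t)$ are placed first (they contribute the exploration factors of $\gamma/k$) and the indices with $b_i\in\hat Y(\mathbf{x}^t,W^t)$ are placed last, then checking that the worst case of exactly one exploration slot and $m-1$ exploitation slots realises the bound stated. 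A secondary technical issue is the implicit restriction on $\gamma$ needed for $(1-\gamma)/m\geq 2/(me)$ (namely $\gamma\leq 1-2/e$), which I would either fold into the hypothesis or absorb by replacing the constant $2$ with a slightly worse one coming from the extremal $\gamma$.
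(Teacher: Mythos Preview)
Your instinct that something is off is correct, but the obstacle you flag is fatal rather than merely technical. The paper's own proof does \emph{not} establish the displayed inequality $Z(A)\geq \frac{\gamma}{k}\bigl(\frac{2}{m}\bigr)^{m-1}e^{-(m-1)}$ at all: it simply drops the denominators (which are all at most $1$) and uses $\mathbb{P}(b_i)\geq \gamma/k$ for every $i$, concluding
\[
Z(A)\;=\;\frac{\prod_{i=1}^{m}\mathbb{P}(b_i)}{\prod_{i=1}^{m}\bigl(1-\sum_{j<i}\mathbb{P}(b_j)\bigr)}\;\geq\;\prod_{i=1}^{m}\mathbb{P}(b_i)\;\geq\;\Bigl(\frac{\gamma}{k}\Bigr)^{m}.
\]
That is the bound actually used later in the proof of Lemma~5, where the term $mk^{m}\,{}^{k}P_{m}/(\gamma^{m}\tau_1^{2})$ arises precisely from $1/Z(A)\leq (k/\gamma)^{m}$. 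The stated constant $\frac{\gamma}{k}(2/(me))^{m-1}$ appears to be a leftover from an earlier draft and is never invoked.

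As for your plan: the bound you are trying to reach is in fact false for general $A$. Take any $A$ disjoint from $\hat Y(\mathbf{x}^t,W^t)$; then every $\mathbb{P}(b_i)=\gamma/k$, the denominators are at most $1$, and the best one can say is $Z(A)\geq(\gamma/k)^{m}$. For this to dominate $\frac{\gamma}{k}(2/(me))^{m-1}$ one would need $\gamma/k\geq 2/(me)$, which fails for the small $\gamma$ and moderate $k,m$ used throughout the paper. Reordering the factorisation cannot help, because the shortfall is in the \emph{number} of ``exploitation'' factors available, not their position; with zero elements of $A$ in $\hat Y$, no ordering produces any factor of size $(1-\gamma)/m$. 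Your secondary concern about the restriction $\gamma\leq 1-2/e$ is therefore moot. The correct fix is to replace the target by $(\gamma/k)^{m}$ and give the two-line argument above, which is exactly what the paper does.
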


\begin{proof}
As $Z(A)$ is the probability of choosing $\{b_1,b_2,\cdots,b_m\}$ from the set $[k]$ without replacement. We see that

\begin{equation*}
    \begin{split}
        Z(A) &= \mathbb{P}(b_1)\mathbb{P}(b_2|b_1)\dots \mathbb{P}(b_m|b_1,\dots,b_{m-1})\\
        {} &= \frac{\prod_{i}^{m}\mathbb{P}(b_{i})}{\prod_{i}^{m} (1 - \sum_{j=1}^{i-1} \mathbb{P}(b_{j}))}\geq \prod_{i}^{m}\mathbb{P}(b_{i})
    \geq \left(\frac{\gamma}{k}\right)^m,
    \end{split}
\end{equation*}

where the last inequality holds because $\mathbb{P}(b_{i})\geq \frac{\gamma}{k},\;\forall i$ (as $P(b_i) = \frac{(1-\gamma)}{m}\mathbb{I}{\{b_i \in \Hat{Y}(\mathbf{x}^t,W^t)\}} + \frac{\gamma}{k}$). 
\end{proof}

We first compute the value of $||\Tilde{U}^t||^2_{F}$. There are 2 cases possible.

\begin{itemize}
    \item $y^t \notin \Tilde{Y}^t$: $
        \Tilde{U}^t_{r,j} = x^t_j \Big[-\tau_2-\frac{\mathbb{I}{\{r \in \Hat{Y}(\mathbf{x}^t,W^t)\}}}{m}\Big]$
\begin{align*}
        ||\Tilde{U}^t||^2_{F} &= ||\mathbf{x}^t||^2_2\Big[m\big[\tau_2+\frac{1}{m}\big]^2+(k-m)\big[\tau_2\big]^2\Big]\\
        \mathbb{E}[||\Tilde{U}^t||^2_{F}|y^t \notin \Tilde{Y}^t] &=||\mathbf{x}^t||^2_2\Big[m\big[\tau_2+\frac{1}{m}\big]^2+(k-m)\big[\tau_2\big]^2\Big]
\end{align*}
\item $y^t \in \Tilde{Y}^t$:
\begin{align*}
        &||\Tilde{U}^t||^2_{F}  
        = ||\mathbf{x}^t||^2_2\Big[\frac{m}{Z(\Tilde{Y}^t)^2\tau_1^2}-\frac{2m\tau_2}{Z(\Tilde{Y}^t)\tau_1}+m[\tau_2+\frac{1}{m}]^2 
        \\ & + (k-m)\big[\tau_2^2\big] -\frac{2}{Z(\Tilde{Y}^t)\tau_1}\sum_r\mathbb{I}\{y^t\in \Tilde{Y}^t\}\mathbb{I}\{y^t\in \Hat{Y}(\mathbf{x}^t,W^t)\}\Big]\\
        & \leq ||\mathbf{x}^t||^2_2\Big[\frac{m}{Z(\Tilde{Y}^t)^2\tau_1^2}-\frac{2m\tau_2}{\tau_1}+m[\tau_2+\frac{1}{m}]^2 
        \\ & +(k-m)\big[\tau_2^2\big]-\frac{2}{\tau_1}\Big]\mathbb{I}\{y^t\in \Hat{Y}(\mathbf{x}^t,W^t)\}\\
        &
        +||\mathbf{x}^t||^2_2\Big[\frac{m}{Z(\Tilde{Y}^t)^2\tau_1^2}-\frac{2m\tau_2}{\tau_1}+m[\tau_2+\frac{1}{m}]^2 \\
        & +(k-m)\big[\tau_2^2\big] \Big]\mathbb{I}\{y^t\notin \Hat{Y}(\mathbf{x}^t,W^t)\}\\
        & = -\frac{2||\mathbf{x}^t||^2_2}{\tau_1}\mathbb{I}\{y^t\in \Hat{Y}(\mathbf{x}^t,W^t)\}\\
        &
        +||\mathbf{x}^t||^2_2\Big[\frac{m}{Z(\Tilde{Y}^t)^2\tau_1^2}-\frac{2m\tau_2}{\tau_1}+m[\tau_2+\frac{1}{m}]^2 \\
        & +(k-m)\big[\tau_2^2\big] \Big]\\
        & = \frac{2||\mathbf{x}^t||^2_2}{\tau_1}\mathbb{I}\{y^t\notin \Hat{Y}(\mathbf{x}^t,W^t)\}\\
        &
        +||\mathbf{x}^t||^2_2\Big[\frac{m}{Z(\Tilde{Y}^t)^2\tau_1^2}-\frac{2m\tau_2}{\tau_1}+m[\tau_2+\frac{1}{m}]^2 \\
        & +(k-m)\big[\tau_2^2\big] -\frac{2}{\tau_1}\Big]
        \end{align*}
        Taking expectation on both sides, we get the following.
        \begin{align*}
        &\mathbb{E}[||\Tilde{U}^t||^2_{F}|y^t \in \Tilde{Y}^t] \leq 
        ||\mathbf{x}^t||^2_2\Big[\frac{m}{\tau_1^2}\sum_{\Tilde{Y}^t:y^t\in\Tilde{Y}^t}\frac{1}{Z(\Tilde{Y}^t)} \\
        &+m[\tau_2+\frac{1}{m}]^2  +(k-m)\big[\tau_2^2 \big]-\frac{2m\tau_2}{\tau_1}-\frac{2}{\tau_1}\Big]\\
        &+\frac{2||\mathbf{x}^t||^2_2}{\tau_1}\mathbb{I}\{y^t\notin \Hat{Y}(\mathbf{x}^t,W^t)\}
        \end{align*}
        We use the lower bound on $Z(A),\;A\in\mathbb{S}$ derived in Proposition~\ref{lemma1}. Also, using the fact that $\vert \{\Tilde{Y}^t|y^t\in\Tilde{Y}^t\}\vert \;\leq \;\Myperm[k]{m}$, we get 
\begin{align*}
 &\mathbb{E}[||\Tilde{U}^t||^2_{F}|y^t \in \Tilde{Y}^t]   
\leq 
        ||\mathbf{x}^t||^2_2\Big[\frac{mk^m\;\Myperm[k]{m}}{\gamma^m\tau_1^2 }\\
        &+m[\tau_2+\frac{1}{m}]^2  +(k-m)\big[\tau_2^2 \big]-\frac{2m\tau_2}{\tau_1}-\frac{2}{\tau_1} \Big]\\
        &+\frac{2||\mathbf{x}^t||^2_2}{\tau_1}\mathbb{I}\{y^t\notin \Hat{Y}(\mathbf{x}^t,W^t)\}
\end{align*}
\end{itemize}
Let $\Gamma =\mathbb{P}(y^t \in \Tilde{Y}^t)$.
Thus, $\mathbb{E}_{Z}[||\Tilde{U}^t||^2_{F}]$ can be upper bounded as follows.
\begin{align*}
        &\mathbb{E}_{Z}[||\Tilde{U}^t||^2_{F}]  =\mathbb{P}(y^t \in \Tilde{Y}^t)\mathbb{E}[||\Tilde{U}^t||^2_{F}|y^t \in \Tilde{Y}^t]\\
        &  \quad + \left(1-\mathbb{P}(y^t \in \Tilde{Y}^t)\right)\mathbb{E}[||\Tilde{U}^t||^2_{F}|y^t \notin \Tilde{Y}^t]\\
        &\leq \Gamma ||\mathbf{x}^t||^2_2\Big[\frac{mk^m\;\Myperm[k]{m}}{\gamma^m\tau_1^2 }-\frac{2m\tau_2}{\tau_1} -\frac{2}{\tau_1}\\
        & \qquad +m[\tau_2+\frac{1}{m}]^2 + (k-m)\big[\tau_2^2\big] \Big]\\
        &+\frac{2\Gamma ||\mathbf{x}^t||^2_2}{\tau_1}\mathbb{I}\{y^t\notin \Hat{Y}(\mathbf{x}^t,W^t)\}\\
        &  + (1-\Gamma)||\mathbf{x}^t||^2_2\Big[m\big[\tau_2+\frac{1}{m}\big]^2
        +(k-m)\big[\tau_2\big]^2\Big]\\
        & = \Gamma ||\mathbf{x}^t||^2_2\Big[\frac{mk^m\;\Myperm[k]{m}}{\gamma^m\tau_1^2 }-\frac{2m\tau_2}{\tau_1} -\frac{2}{\tau_1}\Big]\\
        &+ ||\mathbf{x}^t||^2_2(k-m)\big[\tau_2\big]^2+||\mathbf{x}^t||^2_2m\big[\tau_2+\frac{1}{m}\big]^2\\
        &+\frac{2\Gamma ||\mathbf{x}^t||^2_2}{\tau_1}\mathbb{I}\{y^t\notin \Hat{Y}(\mathbf{x}^t,W^t)\}\\
        & \leq  ||\mathbf{x}^t||^2_2\Big[\frac{mk^m\;\Myperm[k]{m}}{\gamma^m\tau_1^2}-\frac{2m\tau_2}{\tau_1} -\frac{2}{\tau_1}\Big]\\
        &\qquad+||\mathbf{x}^t||^2_2(k-m)\big[\tau_2\big]^2+||\mathbf{x}^t||^2_2m\big[\tau_2+\frac{1}{m}\big]^2\\
        &+\frac{2 ||\mathbf{x}^t||^2_2}{\tau_1}\mathbb{I}\{y^t\notin \Hat{Y}(\mathbf{x}^t,W^t)\}\\
        \end{align*}
        Thus we get,
        \begin{align*}
            \mathbb{E}_{Z}[||\Tilde{U}^t||^2_{F}]
         & \leq  ||\mathbf{x}^t||^2_2\Big[\frac{mk^m\;\Myperm[k]{m}}{\gamma^m\tau_1^2 }-\frac{2m\tau_2}{\tau_1} -\frac{2}{\tau_1}\\
         &\;\; + {k}[\tau_2^2+\frac{1}{km}+\frac{2\tau_2}{k}]\Big]\\
         &+\frac{2 ||\mathbf{x}^t||^2_2}{\tau_1}\mathbb{I}\{y^t\notin \Hat{Y}(\mathbf{x}^t,W^t)\}
\end{align*}

\section{Proof of Theorem~6}
We start the proof by defining the inner product between two matrix $W^*$ and $W_t$ as follows:
\begin{equation}
    \begin{split}
       \langle W^*,W^t \rangle = \sum_{r=1}^k\sum_{j=1}^d W^*_{r,j}W^t_{r,j}
    \end{split}
\end{equation}
We will next try to upper-bound and lower-bound $\mathbb{E}_{Z}[\langle W^*,W^{T+1} \rangle]$, with the assumption that $W^1 = 0$. Let us first start with lower-bound.
\paragraph{Lower-bound of $\mathbb{E}_{Z}[\langle W^*,W^{T+1} \rangle]$:}
We first define a new variable $\Delta_{t}$:
\begin{equation}
    \begin{split}
       \Delta_t = \mathbb{E}_{Z}[\langle W^*,W^{T+1} \rangle] - \mathbb{E}_{Z}[\langle W^*,W^{t} \rangle]
     \end{split}
\end{equation}
Also, we know from the definition of Update Matrix $\Tilde{U}^t$, that $W^{t+1} = W^{t} + \Tilde{U}^t$.
Substituting this in the above equation gives, 
\begin{equation*}
    \begin{split}
        \Delta_t = \mathbb{E}_{Z}[\langle W^*,\Tilde{U}^t \rangle]
    \end{split}
\end{equation*}
Also, using Lemma 4, we obtain that $\forall t$,
\begin{equation*}
    \begin{split}
        \Delta_t = \mathbb{E}_{Z}[\langle W^*,{U}^t \rangle]
    \end{split}
\end{equation*}
Using Lemma~3, we know that,
\begin{equation}
    \begin{split}
        L_{avg}(W^*,(\mathbf{x}^t,y^t)) \geq \mathbb{I}{\{y^t \notin \Hat{Y}(\mathbf{x}^t,W^*)\}} - \langle W^*,{U}^t \rangle
    \end{split}
\end{equation}
Therefore, taking Expectation on both sides and summing over $t$ we get
\begin{equation}
    \begin{split}
        \sum_t \Delta_t & \geq \sum_t \mathbb{E}_{Z}[\mathbb{I}{\{y^t \notin \Hat{Y}(\mathbf{x}^t,W^t)\}}] - \sum_t L_{avg}(W^*,(\mathbf{x}^t,y^t))\\
        {} & \geq \mathbb{E}_{Z}[{M}] - R_T
    \end{split}
\end{equation}

where ${M} = \sum_t \mathbb{I}{\{y^t \notin \Hat{Y}(\mathbf{x}^t,W^t)\}}$ and $R_T$ is the cumulative hinge loss defined.

Therefore,
\begin{equation}
    \begin{split}
        \mathbb{E}_{Z}[\langle W^*,W^{T+1} \rangle] & = \sum_t \Delta_t \geq \mathbb{E}_{Z}[M] - R_{T}
    \end{split}
\end{equation}
Hence we get the lower-bound for $\mathbb{E}_{Z}[\langle W^*,W^{T+1} \rangle]$.
Now we will try to derive its upper bound.

\paragraph{Upper-bound on $\mathbb{E}_{Z}[\langle W^*,W^{T+1} \rangle]$:}
Using Cauchy-Schwartz inequality, we get the following,
\begin{equation}
    \begin{split}
        \langle W^*,W^{T+1} \rangle & \ \leq ||W^*||_{F}||W^{T+1}||_{F} 
    \end{split}
\end{equation}
where $||.||_{F}$ is the Frobenius Norm. Using the definition of $D$,the concavity of the square root function and Jensen's inequality, we can obtain that

\begin{equation}
    \begin{split}
        \mathbb{E}_{Z}[\langle W^*,W^{T+1} \rangle] & \leq \sqrt{\frac{D\mathbb{E}_{Z}[||W^{T+1}||_{F}^2]}{2}}
    \end{split}
    \label{eq: upper_bound_on_dot_weight}
\end{equation}
Now, we need to upper bound the expected value of $||W^{T+1}||_{F}^2$. Using the definition of $W^{T+1}$, we get

\begin{equation}
    \begin{split}
        \mathbb{E}_{Z}[||W^{T+1}||_{F}^2] &= \mathbb{E}_{Z}[||W^{T}||_{F}^2+\langle W^T,\Tilde{U}^T\rangle+||\Tilde{U}^T||_{F}^2]\\
        {} &= \sum_{t=1}^{T}\left(\mathbb{E}_{Z}[\langle W^t,\Tilde{U}^t\rangle]+\mathbb{E}_{Z}[||\Tilde{U}^t||_{F}^2]\right)
    \end{split}
    \label{eq: Expectation_of_forbenius_weight}
\end{equation}
Using Lemma 4 , we know that
\begin{equation}
    \begin{split}
        \mathbb{E}_{Z}[\langle W^t,\Tilde{U}^t\rangle] & = \mathbb{E}_{Z}[\langle W^t,U^t \rangle]
    \end{split}
\end{equation}
Using Lemma 5, we can also obtain an Upper bound on $\mathbb{E}_{Z}[||\Tilde{U}^t||_{F}^2]$ of the form:
\begin{equation}
    \begin{split}
        \mathbb{E}_{Z}[||\Tilde{U}^t||_{F}^2] & \leq 
        ||\mathbf{x}^t||^2\left[\lambda_{1}\mathbb{I}{\{y^t \notin \Hat{Y}(\mathbf{x}^t,W^t)\}} + \lambda_{2}\right]
    \end{split}
    \label{eq: Upper_bound_Expectation_of_forbenius_update}
\end{equation}
where $\lambda_1$ and $\lambda_2$ are as defined below:
\begin{equation}
    \begin{split}
        \lambda_1 &= \frac{2}{\tau_1}\\
        \lambda_2 &= \big[\frac{mk^m\;\Myperm[k]{m}}{\gamma^m\tau_1^2 }-\frac{2m\tau_2}{\tau_1} -\frac{2}{\tau_1}+{k}[\tau_2^2+\frac{1}{km}+\frac{2\tau_2}{k}]\big]
    \end{split}
\end{equation}
Also, using the Cauchy-Schwartz inequality again we get
\begin{equation}
    \begin{split}
        \langle W^t,U^t \rangle \ \leq ||W^t||_{F}^2||U^t||_{F}^2 \leq 1
    \end{split}
\end{equation}
If we use Eq. (\ref{eq: Expectation_of_forbenius_weight}) and (\ref{eq: Upper_bound_Expectation_of_forbenius_update}) and assume $||\mathbf{x}^t|| \leq 1 \ \forall t$, we can obtain that
\begin{equation}
    \begin{split}
        \mathbb{E}_{Z}[||W^{T+1}||^2] & \leq \sum_{t=1}^{T}\mathbb{E}_{Z}\left[\lambda_{1}\mathbb{I}{\{y^t \notin \Hat{Y}(\mathbf{x}^t,W^t)\}} + \lambda_{2} + 1\right]\\
        {} & \leq \lambda_1\mathbb{E}_{Z}[{M}] + (\lambda_2+1)T
    \end{split}
\end{equation}
Substituting this in Eq. (\ref{eq: upper_bound_on_dot_weight}) and using the inequality $\sqrt{a+b} \leq \sqrt{a}+\sqrt{b}$, we can upper-bound $\mathbb{E}_{Z}[\langle W^*,W^{T+1} \rangle]$ as follows:
\begin{equation}
    \begin{split}
        \mathbb{E}_{Z}[\langle W^*,W^{T+1} \rangle] & \leq \sqrt{\frac{D\mathbb{E}_{Z}[||W^{T+1}||_{F}^2]}{2}}\\
        {} & \leq \sqrt{\frac{\lambda_1D \mathbb{E}_{Z}[{M}]}{2}} + \sqrt{\frac{(\lambda_2+1)D T}{2}}
    \end{split}
\end{equation}
Now, we have both the upper bound and lower-bound for $\mathbb{E}_{Z}[\langle W^*,W^{T+1} \rangle]$. We compare these two bounds, to obtain
\begin{equation}
    \begin{split}
        \mathbb{E}_{Z}[{M}] - \sqrt{\frac{\lambda_1D \mathbb{E}_{Z}[{M}]}{2}} - \left(R_{T}+\sqrt{\frac{(\lambda_2+1)D T}{2}} \right) \leq 0
    \end{split}
\end{equation}
Some simple algebraic manipulation as shown in Proposition~\ref{prop1} shows that
\begin{equation}
    \begin{split}
        \mathbb{E}_{Z}[{M}] & \leq R_{T} + \sqrt{\frac{\lambda_1DL}{2}} \\
        {} &+ 3\max\left(\frac{\lambda_1D}{2},\sqrt{\frac{(\lambda_2+1)D T}{2}}\right)
    \end{split}
\end{equation}
Now as $\mathbb{E}_{Z}[{M}] + \gamma T \geq \mathbb{E}_{Z}[\Hat{M}]$ because we are not exploring for more than $\gamma T$ rounds, we get
\begin{equation}
    \begin{split}
        \mathbb{E}_{Z}[\Hat{M}] & \leq R_{T} + \sqrt{\frac{\lambda_1DL}{2}} \\
        {} &+ 3\max\left(\frac{\lambda_1D}{2},\sqrt{\frac{(\lambda_2+1)D T}{2}}\right) + \gamma T
    \end{split}
\end{equation}

\begin{Proposition}
\label{prop1}
If we are given that,
\begin{equation*}
    \begin{split}
        \mathbb{E}_{Z}[\Hat{M}] - \sqrt{\frac{\lambda_1D \mathbb{E}_{Z}[\Hat{M}]}{2}} - \left(R_{T}+\sqrt{\frac{(\lambda_2+1)D T}{2}} \right) \leq 0,
    \end{split}
\end{equation*}
then we can prove that,
\begin{equation*}
    \begin{split}
        \mathbb{E}_{Z}[\Hat{M}] & \leq R_{T} + \sqrt{\frac{\lambda_1DR_{T}}{2}} + 3\max\left(\frac{\lambda_1D}{2},\sqrt{\frac{(\lambda_2+1)D T}{2}}\right)
    \end{split}
\end{equation*}
\end{Proposition}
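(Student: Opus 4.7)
The plan is to reduce the hypothesis to a quadratic inequality in $\sqrt{\mathbb{E}_{Z}[\hat{M}]}$. Set $x = \mathbb{E}_{Z}[\hat{M}]$, $p = \sqrt{\lambda_1 D/2}$, and $q = \sqrt{(\lambda_2+1)DT/2}$, so that the hypothesis becomes $x - p\sqrt{x} \leq R_T + q$ and the target inequality reads $x \leq R_T + p\sqrt{R_T} + 3\max(p^2, q)$. Completing the square on the left yields $(\sqrt{x} - p/2)^2 \leq R_T + q + p^2/4$, and taking the nonnegative square root gives $\sqrt{x} \leq p/2 + \sqrt{R_T + q + p^2/4}$.

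The next step is to square this bound and peel off the $\sqrt{R_T}$ contribution cleanly. Applying subadditivity of $\sqrt{\cdot}$ twice gives $\sqrt{R_T + q + p^2/4} \leq \sqrt{R_T} + \sqrt{q + p^2/4} \leq \sqrt{R_T} + \sqrt{q} + p/2$. Substituting this into the squared bound and collecting like terms produces
\[
x \;\leq\; R_T + p\sqrt{R_T} + p\sqrt{q} + p^2 + q,
\]
which keeps the coefficient on $p\sqrt{R_T}$ tight at $1$, matching the claim.

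To collapse the remaining $p\sqrt{q} + p^2 + q$ into the claimed $3\max(p^2, q)$, I would apply AM--GM to write $p\sqrt{q} \leq (p^2 + q)/2$. This gives $x \leq R_T + p\sqrt{R_T} + \tfrac{3}{2}(p^2 + q)$, and since $p^2 + q \leq 2\max(p^2, q)$, we obtain $x \leq R_T + p\sqrt{R_T} + 3\max(p^2, q)$. Substituting back the definitions of $p$ and $q$ recovers the stated inequality exactly.

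The main delicate point is the bookkeeping of constants. Using the coarser bound $\sqrt{a+b} \leq \sqrt{2}\max(\sqrt{a},\sqrt{b})$, or squaring an inequality of the form $\sqrt{x} \leq p + \sqrt{R_T} + \sqrt{q}$ directly, introduces an unwanted cross-term $2\sqrt{R_T q}$ or inflates the coefficient on $p\sqrt{R_T}$ from $1$ to $2$. The specific two-step splitting above, which peels off $\sqrt{R_T}$ first and absorbs $p^2/4$ into the $\sqrt{q + p^2/4}$ piece before squaring, is what makes the constants line up with the target expression.
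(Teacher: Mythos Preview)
Your proof is correct and follows essentially the same route as the paper: both reduce to the quadratic inequality $x - p\sqrt{x} \leq R_T + q$ in $\sqrt{x}$ (you complete the square, the paper applies the quadratic formula), square the resulting bound on $\sqrt{x}$, apply subadditivity of $\sqrt{\cdot}$ to reach the identical intermediate estimate $x \leq R_T + p\sqrt{R_T} + p^2 + q + p\sqrt{q}$, and then collapse $p^2 + q + p\sqrt{q}$ into $3\max(p^2,q)$. Your use of AM--GM for that last step is a slightly cleaner packaging than the paper's implicit case split, but the arguments are otherwise the same.
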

\begin{proof}
We are given that,
\begin{equation}
    \begin{split}
        \mathbb{E}_{Z}[{M}] - \sqrt{\frac{\lambda_1D \mathbb{E}_{Z}[{M}]}{2}} - \left(R_{T}+\sqrt{\frac{(\lambda_2+1)D T}{2}} \right) \leq 0
    \end{split}
\end{equation}
As $\mathbb{E}_{Z}[{M}] \geq 0$, let us assume that $x^2 = \mathbb{E}_{Z}[{M}]$. Therefore, the equation becomes,
\begin{equation}
    \begin{split}
        x^2 - \sqrt{\frac{\lambda_1D}{2}}x - \left(R_{T}+\sqrt{\frac{(\lambda_2+1)D T}{2}} \right) \leq 0
    \end{split}
\end{equation}
This is the equation of a convex quadratic equation, with roots as:
\begin{equation}
    \begin{split}
        x = \frac{\sqrt{\frac{\lambda_1D}{2}} \pm \sqrt{\frac{\lambda_1D}{2}+4\left(R_{T}+\sqrt{\frac{(\lambda_2+1)D T}{2}}\right)}}{2}
    \end{split}
\end{equation}
As the Discriminant is positive, we have real roots and therefore the value of quadratic is negative between the roots. We need to bound $x^2 = \mathbb{E}_{Z}[{M}]$, we obtain that,
\begin{equation*}
    \begin{split}
        \mathbb{E}_{Z}[{M}] &= x^2\\
        {} & \leq \Bigg(\frac{\sqrt{\frac{\lambda_1D}{2}} + \sqrt{\frac{\lambda_1D}{2}+4\left(R_{T}+\sqrt{\frac{(\lambda_2+1)D T}{2}}\right)}}{2}\Bigg)^2\\
        {} & \leq \frac{1}{4} \Bigg[\frac{\lambda_1D}{2} + \frac{\lambda_1D}{2}+4\left(R_{T}+\sqrt{\frac{(\lambda_2+1)D T}{2}}\right) \\
        {} & + 2\sqrt{\frac{\lambda_1D}{2}} \sqrt{\frac{\lambda_1D}{2}+4\left(R_{T}+\sqrt{\frac{(\lambda_2+1)D T}{2}}\right)} \Bigg]
    \end{split}
\end{equation*}
Using the inequality $\sqrt{a+b} \leq \sqrt{a}+\sqrt{b}$ we get,
\begin{equation}
    \begin{split}
        x^2 & \leq R_{T} + \frac{\lambda_1D}{2} +\sqrt{\frac{(\lambda_2+1)D T}{2}} \\
        {} & + \sqrt{\frac{\lambda_1DR_{T}}{2}} + \sqrt{\frac{(\lambda_2+1)D T}{2}}\sqrt{\frac{\lambda_1D}{2}}\\
        {} & \leq R_{T} + \sqrt{\frac{\lambda_1DR_{T}}{2}} \\
        {} &+ 3\max\left(\frac{\lambda_1D}{2},\sqrt{\frac{(\lambda_2+1)D T}{2}}\right)
    \end{split}
\end{equation}
Therefore, we prove the upper bound on $\mathbb{E}_{Z}[{M}]$ as
\begin{equation}
    \begin{split}
        \mathbb{E}_{Z}[{M}] & \leq R_{T} + \sqrt{\frac{\lambda_1DR_{T}}{2}} \\
        {} &+ 3\max\left(\frac{\lambda_1D}{2},\sqrt{\frac{(\lambda_2+1)D T}{2}}\right)
    \end{split}
\end{equation}
\end{proof}

\section{Proof of Corollary 8}
Using $R_T=0$, we get
\begin{align*}
        \mathbb{E}_{Z}[\Hat{M}] & \leq    3\max\left(\frac{\lambda_1D}{2},\sqrt{\frac{(\lambda_2+1)D T}{2}}\right)+ \gamma T.
\end{align*}
There are 2 cases that we need to consider:
\begin{enumerate}
    \item When $(\frac{\lambda_1D}{2} \leq\sqrt{\frac{(\lambda_2+1)D T}{2}})$:
    \begin{align*}
    \mathbb{E}_{Z}[\Hat{M}] & \leq 3\left(\sqrt{\frac{(\lambda_2+1)D T}{2}}\right)+ \gamma T.
    \end{align*}
    Using the inequality $\sqrt{a+b} \leq \sqrt{a}+\sqrt{b}$ we get:
    \begin{align}
    \mathbb{E}_{Z}[\Hat{M}] & \leq 3\left(\sqrt{\frac{\lambda_2D T}{2}}+\sqrt{\frac{D T}{2}}\right)+ \gamma T.
    \end{align}
    Also, $\lambda_2$ is given by:
    \begin{equation*}
    \begin{split}
        \lambda_2 &= \big[\frac{mk^m\;\Myperm[k]{m}}{\gamma^m\tau_1^2 }-\frac{2m\tau_2}{\tau_1} -\frac{2}{\tau_1}+{k}[\tau_2^2+\frac{1}{km}+\frac{2\tau_2}{k}]\big]\\
        {} &\leq\big[\frac{mk^m\;\Myperm[k]{m}}{\gamma^m\tau_1^2 }+{k}[\tau_2^2+\frac{1}{km}+\frac{2\tau_2}{k}]\big]\\
        {} &= [\frac{mk^m\;\Myperm[k]{m}}{\gamma^m\tau_1^2 }+c_1]]
    \end{split}
    \end{equation*}
    Substituting this in Eq.~(8) and using the inequality $\sqrt{a+b} \leq \sqrt{a}+\sqrt{b}$ we get:
    \begin{align*}
    \mathbb{E}_{Z}[\Hat{M}] & \leq 3\left(\sqrt{\frac{mk^m\;\Myperm[k]{m} D T}{2\gamma^m\tau_{1}^2}}+\sqrt{\frac{mc_1T}{2}}+\sqrt{\frac{D T}{2}}\right)+ \gamma T.
    \end{align*}
    Now, we need to find optimal value of $\gamma$ which maximizes the L.H.S of the above equation. So, we differentiate it with respect to $\gamma$ to get:
    \begin{align}
        T - \frac{3}{2\sqrt{2}}\left(\frac{m^{\frac{3}{2}}k^{\frac{m}{2}}\;(\Myperm[k]{m})^{\frac{1}{2}} D^{\frac{1}{2}}}{\tau_1}\right) \frac{T^{\frac{1}{2}}}{\gamma^{\frac{m+2}{2}}} = 0
    \end{align}
    Therefore, optimal value of $\gamma = \left(\frac{9m^{{3}}k^{{m}}(\Myperm[k]{m}) D}{8\tau_1 T}\right)^{\frac{1}{m+2}}$, which is of the form $\gamma = \left(\frac{c_2}{T}\right)^{\frac{1}{(m+2)}}$.
    
    Substituting $\gamma$ in Eq.~(9) we get,
    \begin{align*}
      \mathbb{E}_{Z}[\Hat{M}] \leq \mathcal{O}(T^{(1-\frac{1}{(m+2)})})
    \end{align*}
    
    \item When $(\frac{\lambda_1D}{2} > \sqrt{\frac{(\lambda_2+1)D T}{2}})$:
    \begin{align*}
    \mathbb{E}_{Z}[\Hat{M}] & \leq 3\left(\frac{\lambda_1D}{2}\right)+ \gamma T.
    \end{align*}
    Setting value of $\gamma \leq \mathcal{O}(\frac{1}{T}^{\frac{1}{(m+2)}})$, we get:
    \begin{align*}
       \mathbb{E}_{Z}[\Hat{M}] \leq \mathcal{O}(T^{(1-\frac{1}{(m+2)})})
    \end{align*}
    Proof of Collorary~(9), where $R_{T} \leq \mathcal{O}(T^{(1-\frac{1}{(m+2)})})$, can be done in a similar manner as described above.
    
\end{enumerate}

\end{document}